\documentclass{article}
\usepackage{enumitem}
\usepackage{microtype}
\usepackage{graphicx}
\usepackage{booktabs} % for professional tables

\usepackage{hyperref}

\usepackage[accepted]{icml2026}
\usepackage{amsmath}
\usepackage{amssymb}
\usepackage{mathtools}
\usepackage{amsthm}
\usepackage[capitalize,noabbrev]{cleveref}

\theoremstyle{plain}
\newtheorem{theorem}{Theorem}[section]

\newtheorem{lemma}[theorem]{Lemma}

\theoremstyle{definition}
\newtheorem{definition}[theorem]{Definition}
\newtheorem{assumption}[theorem]{Assumption}
\theoremstyle{remark}
\newtheorem{remark}{Remark}[section]

\newcommand{\op}{\mathrm{op}}
\newcommand{\Var}{\mathrm{Var}}
\newcommand{\Cov}{\mathrm{Cov}}
\newcommand{\tr}{\mathrm{tr}}

\newcommand{\E}{\mathbb{E}}
\newcommand{\R}{\mathbb{R}}
\makeatletter
  {\par\refstepcounter{remark}% 计数 + 可交叉引用
   \noindent\textbf{Remark~\theremark.}%
   \itshape\ #1}%
  {\par}
\makeatother
\usepackage[textsize=tiny]{todonotes}
\usepackage{graphicx}
\usepackage{subcaption} % subfigure
\usepackage{booktabs}   % \toprule \midrule \bottomrule
\usepackage{xcolor}     % \color{red}
\usepackage{capt-of}    % \captionof
\usepackage{placeins}   % \FloatBarrier (可选)

\icmltitlerunning{Extra-Merge: Tracing the Rank-1 Subspace of Model Merging in Language Model Pre-Training}

\begin{document}

\twocolumn[
\icmltitle{Extra-Merge: Tracing the Rank-1 Subspace of \\ Model Merging in Language Model Pre-Training}

% It is OKAY to include author information, even for blind
% submissions: the style file will automatically remove it for you
% unless you've provided the [accepted] option to the icml2025
% package.

% List of affiliations: The first argument should be a (short)
% identifier you will use later to specify author affiliations
% Academic affiliations should list Department, University, City, Region, Country
% Industry affiliations should list Company, City, Region, Country

% You can specify symbols, otherwise they are numbered in order.
% Ideally, you should not use this facility. Affiliations will be numbered
% in order of appearance and this is the preferred way.
\icmlsetsymbol{equal}{*}
\begin{icmlauthorlist}
\icmlauthor{Wenjie Zhou}{cas,ucas,equal}
\icmlauthor{Bohan Wang}{alibaba,equal}
\icmlauthor{Hongtao Zhang}{cas,sais}
\icmlauthor{Chenxi Jia}{cas,seu}
\icmlauthor{Wei Chen}{cas,ucas}
\icmlauthor{Xueqi Cheng}{cas,ucas}
\end{icmlauthorlist}
\icmlaffiliation{sais}{School of Advanced Interdisciplinary Sciences, University of Chinese Academy of Sciences, Beijing, China}
\icmlaffiliation{cas}{State Key Laboratory of AI Safety, Institute of Computing Technology, Chinese Academy of Sciences, China}
\icmlaffiliation{ucas}{University of Chinese Academy of Sciences, China}
\icmlaffiliation{alibaba}{Alibaba Group, China}
\icmlaffiliation{seu}{School of Mathematics, Southeast University, Nanjing, China}
% 通讯作者（对应你原文的 "Corresponding author"）
\icmlcorrespondingauthor{Wei Chen}{chenwei2022@ict.ac.cn}

% （可选）如果你也想把其他作者邮箱单独列出来，可以加多条：
\icmlcorrespondingauthor{Wenjie Zhou}{zj4323005@gmail.com}
\icmlcorrespondingauthor{Bohan Wang}{bhwangfy@gmail.com}
\icmlcorrespondingauthor{Xueqi Cheng}{cxq@ict.ac.cn}
% You may provide any keywords that you
% find helpful for describing your paper; these are used to populate
% the "keywords" metadata in the PDF but will not be shown in the document
\icmlkeywords{Machine Learning, ICML}

\vskip 0.3in
]

% this must go after the closing bracket ] following \twocolumn[ ...

% This command actually creates the footnote in the first column
% listing the affiliations and the copyright notice.
% The command takes one argument, which is text to display at the start of the footnote.
% The \icmlEqualContribution command is standard text for equal contribution.
% Remove it (just {}) if you do not need this facility.

%\printAffiliationsAndNotice{}  % leave blank if no need to mention equal contribution
\printAffiliationsAndNotice{\icmlEqualContribution} % otherwise use the standard text.

\begin{abstract}
Model merging has emerged as a lightweight paradigm for enhancing Large Language Models (LLMs), yet its underlying mechanisms  remain poorly understood. In this work, we analyze late-stage pre-training trajectories and uncover a \textbf{Rank-1 Subspace} phenomenon: while raw optimization steps oscillate violently, consecutive \emph{merged} checkpoints collapse onto a stable, approximately one-dimensional linear manifold. We theoretically ground this observation in a \emph{river-valley} landscape analysis: averaging acts as a geometric low-pass filter that dampens high-curvature noise to reveal the optimal descent direction. Capitalizing on this insight, we propose \textbf{Extra-Merge}, a training-free strategy that extrapolates along this subspace to minimize loss without additional gradient updates. Extensive experiments across GPT-2 and LLaMA families (124M to 2B) demonstrate that Extra-Merge consistently outperforms standard merging baselines. Notably, it yields consistent zero-shot accuracy gains on Pythia-12B downstream tasks and generalizes effectively to the Muon optimizer \citep{jordan2024muon}.
\end{abstract}

%Pre-training Large Language Models (LLMs) is characterized by prohibitive computational costs and rigid optimization schedules \citep{kaplan2020scaling,hoffmann2022training,hägele2024scalinglawscomputeoptimaltraining}. Modern pre-training runs consume vast amounts of GPU hours, where the optimization trajectory is heavily dictated by the learning rate scheduler—typically warmup-stable-decay \citep{hu2024minicpmunveilingpotentialsmall} or cosine scheduler. Consequently, the final model capability is often path-dependent and realized only at the very end of the decay phase. Given these constraints, there is a critical need for \emph{training-free} or \emph{post-hoc} mechanisms capable of not only anticipating the final convergence limit but also actively enhancing the model's ultimate capability. 

\section{Introduction}
Pre-training Large Language Models (LLMs) is characterized by rigid optimization schedules and immense computational costs \citep{openai2024gpt4technicalreport,comanici2025gemini}. Modern pre-training runs consume vast GPU hours \citep{kaplan2020scaling,hoffmann2022training,hägele2024scalinglawscomputeoptimaltraining}, with the optimization trajectory strictly dictated by learning rate schedulers—such as Warmup-Stable-Decay (WSD) \citep{hu2024minicpmunveilingpotentialsmall} or Cosine annealing. Consequently, the final model capability is often path-dependent, realized only during the terminal phase of the learning rate decay. Given these constraints, there is a critical demand for \emph{post-hoc} mechanisms that can exploit the latent information within the optimization trajectory to maximize performance, without incurring the prohibitive cost of additional gradient steps.

%Recently, model merging has emerged as an effective, lightweight strategy to address this challenge. Rooted in classical Polyak averaging \citep{polyak1992acceleration} and popularized in deep learning by Stochastic Weight Averaging (SWA) \citep{izmailov2018averaging}, this approach posits that averaging points along the optimization trajectory approximates the center of a flat local minimum, thereby improving generalization. Recently, this paradigm has been adapted to LLM pre-training through frameworks like Latest Weight Averaging (LAWA) \citep{kaddour2022stop,sanyal2023early} and Pre-trained Model Averaging (PMA) \citep{li2025model}. These methods typically operate by carefully tuning the checkpoint saving interval and the averaging window size (the number of models to merge), subsequently aggregating the selected weights via uniform or exponential moving averages. Empirical studies demonstrate that this simple averaging of late-stage checkpoints can consistently reduce validation loss and yield robust improvements in downstream reasoning tasks.

In response, model merging has emerged as a potent, training-free paradigm to address this efficiency bottleneck. Rooted in classical Polyak averaging \citep{polyak1992acceleration} and revitalized by Stochastic Weight Averaging (SWA) \citep{izmailov2018averaging}, this approach posits that averaging points along the optimization path approximates the center of a flat local minimum, thereby enhancing generalization. Recently, frameworks like Latest Weight Averaging (LAWA) \citep{kaddour2022stop} and Pre-trained Model Averaging (PMA) \citep{li2025model} have successfully adapted this paradigm to LLM pre-training. By aggregating late-stage checkpoints via uniform or exponential moving averages, these methods consistently reduce validation loss and bolster performance on downstream reasoning tasks, offering a "free lunch" improvement over the final converged model.

Despite model merging's widespread adoption, the geometric principles underpinning model merging in the non-convex landscape of LLMs remain obscure. While existing literature attributes these gains to the static property of \emph{local flatness} \citep{izmailov2018averaging,chen2017checkpoint,li2025model}, this perspective overlooks the dynamic geometry of the optimization trajectory itself. It remains unclear how the merging process transforms the chaotic oscillation of raw gradients into a coherent path of descent. We aims to bridge this gap by shifting the focus from the static loss landscape to the \emph{trajectory geometry}, addressing a fundamental question:
\begin{center}
\textit{What are the geometric properties of the manifold formed by merged checkpoints, and can we exploit this structure for futher enhancement?}
\end{center}

Our contributions are summarized as follows:

\vspace{-0.4\baselineskip}
\begin{itemize}[leftmargin=*]
\item \textbf{Rank-1 subspace phenomenon in merged trajectories.}
%We identify a fundamental geometric shift in late-stage pre-training: while raw optimization trajectories oscillate chaotically, merged checkpoints collapse onto a stable, low-dimensional linear manifold.
We empirically show that late-stage \emph{merged} checkpoints concentrate on an approximately one-dimensional linear manifold.Concretely, while interpolation between adjacent \emph{raw} checkpoints exhibits a convex-basin profile (midpoint loss lower than both endpoints), interpolation between adjacent \emph{merged} checkpoints becomes nearly monotone, indicating a smoother path of progress.
Through extensive PCA analysis, we demonstrate that this \textbf{Rank-1 Subspace} captures the dominant direction of descent (explaining $>94\%$ of variance), this contrasts sharply with raw trajectories,whose variance disperses over multiple components and whose projections are non-monotone.
\vspace{-0.4\baselineskip}
\item \textbf{Extra-Merge Strategy.} 
Capitalizing on this geometric insight, we propose Extra-Merge, a training-free algorithm that estimates the subspace tangent to extrapolate progress without additional gradient updates. We provide a rigorous theoretical justification under the \emph{river-valley} loss framework \citep{wen2024understanding}, proving that averaging acts as a geometric low-pass filter that aligns the trajectory with the optimal descent direction, thereby guaranteeing loss reduction via extrapolation.
\vspace{-0.4\baselineskip}

\item \textbf{Universality across Scales and Optimizers.} 
Finally, We validate Extra-Merge across a wide range of model scales (GPT-2/LLaMA, 124M to 2B) and learning schedulers. Crucially, our method yields consistent zero-shot accuracy gains on Pythia-12B \cite{biderman2023pythia} downstream benchmarks and generalizes effectively to the orthogonal update regime of the \textbf{Muon optimizer}, demonstrating that the Rank-1 Subspace is a robust, optimizer-agnostic property of LLM training.
\end{itemize}
\paragraph{Conflict of Interest Disclosure.}
The authors declare no financial conflicts of interest related to this work.

\section{Related Works}

\paragraph{Model Merging in Pre-training.} 
Model merging is a classic topic in machine learning \citep{utans1996weight, chen2017checkpoint, wortsman2022model,yang2024model}. In this paper, we specifically employ checkpoints saved during pre-training and average these checkpoint parameters to improve performance without requiring substantial resources. The modern resurgence of this paradigm in deep learning is largely attributed to \textbf{Stochastic Weight Averaging (SWA)} \citep{izmailov2018averaging}. SWA demonstrated that averaging weights traversed by SGD with a cyclical or high constant learning rate leads to wider optima and better generalization than standard training. More recently, these techniques have been adapted specifically for Large Language Models (LLMs). \textbf{Latest Weight Averaging (LAWA)} \citep{kaddour2022stop, sanyal2023early} and Pre-trained Model Averaging (PMA) \citep{li2025model} investigate the benefits of averaging late-stage checkpoints in pre-training. Similarly, recent studies \citep{tian2025wsm,liu2024checkpoint,luo2025learning} have integrated merging directly into the pre-training loop. Unlike prior works that view checkpoints as static points for interpolation, we interpret them as a dynamic sequence, allowing us to extrapolate beyond the convex hull of observed weights. To ensure a comprehensive evaluation of novelty and distinctiveness, we provide a detailed taxonomic analysis of the model merging landscape in Appendix~\ref{app:related_work}.

\vspace{-0.4\baselineskip}

%It is important to distinguish our work from the broader field of \textit{Model Fusion}, which aims to merge distinct models trained independently or from different initializations (e.g., \textit{Model Soups} \citep{wortsman2022model} or \textit{Git Re-Basin} \citep{ainsworth2022git}). While those works rely heavily on the concept of \textit{Linear Mode Connectivity} to permute and align disparate loss basins \citep{frankle2020linear, entezari2021role}, we focus on the single-run pre-training setting. Unlike multi-model fusion, our approach leverages the intrinsic temporal correlation of checkpoints within a single optimization trajectory to perform extrapolation, thereby avoiding the complex alignment problems associated with fusing independent models.

\paragraph{Geometry of the Loss Landscape.}
The geometry of the loss landscape is fundamental to understanding the success of deep neural networks \citep{fort2019emergent,gur2018gradient,li2018measuring}. 
Closely related to merging is the phenomenon of \textbf{Linear Mode Connectivity (LMC)} \citep{frankle2020linear,gotmare2018usingmodeconnectivityloss,goodfellow2014qualitatively,vlaar2022can,lucas2021analyzing}, which establishes that multiple models trained independently on the same task can often be connected via a low-loss linear path. While LMC typically addresses the connectivity between converged and independent models, our work investigates the local geometry of the trajectory \textit{during} the late stages of a single pre-training run.
Most relevant to our proposed Extra-Merge is the \textbf{River-Valley landscape} perspective on optimization dynamics \citep{ cohen2021gradient,wen2024understanding,song2024does,davis2025gradient}. These works characterize the neural network loss landscape as being composed of flat directions (``rivers'') and sharp directions (``mountains''), corresponding to Hessian eigendirections with distinct curvatures. 
%Under this perspective, training dynamics are described as oscillating along sharp directions while slowly descending the loss along flat directions. 
Furthermore, recent studies have leveraged this intuition to propose effective acceleration strategies for LLM pre-training \citep{wang2025sharpness,zhou2025bsfa,wang2024improving}, providing empirical validation for the correctness of this perspective.
%In this work, we demonstrate that standard averaging acts as a geometric low-pass filter. This process effectively dampens high-curvature oscillations, revealing a stable `Rank-1 Subspace' that facilitates reliable extrapolation along the flat river direction.

\section{Preliminaries}
\label{sec:prelim}

\paragraph{Problem Setup.} 
We consider the pre-training of a Large Language Model (LLM) parameterized by $\theta \in \mathbb{R}^d$. The optimization process generates a trajectory of parameters $\{\theta_0, \theta_1, \dots, \theta_T\}$ by minimizing a loss function $\mathcal{L}(\theta)$ over a dataset, typically using optimizers such as AdamW. We denote the checkpoint saved at training step $t$ as $\theta_t$.

\begin{figure}[htbp]
  \centering
  \includegraphics[width=0.9\linewidth]{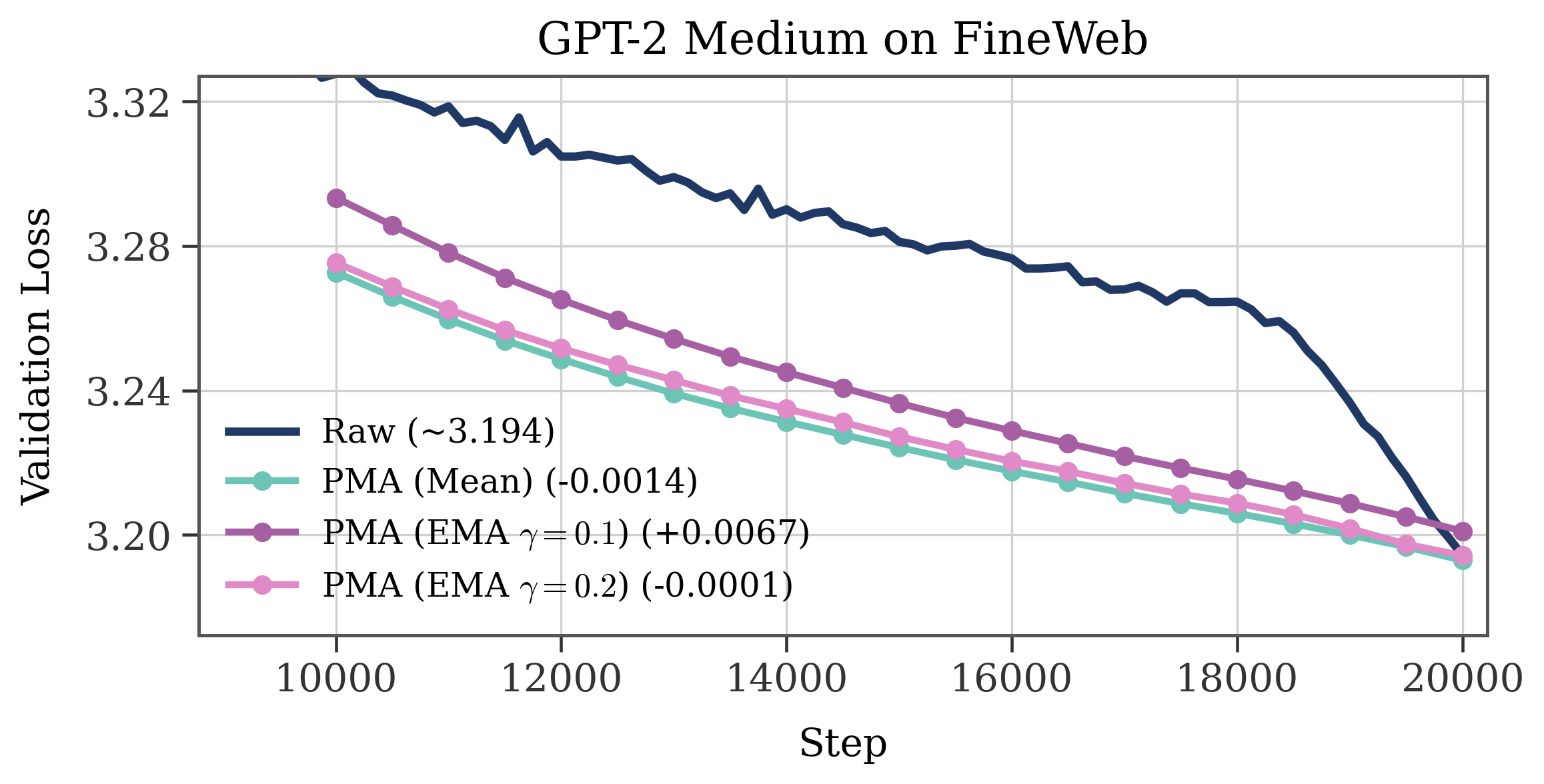}
  \caption{\textbf{Comparison of merging strategies.} We compare the validation loss of the raw optimization trajectory against PMA (Uniform) and EMA-based merging schemes ($\gamma \in \{0.1, 0.2\}$). Experiments are conducted on GPT-2 Small (124M) trained on FineWeb ($\tau=500, n=8$). }
  \label{fig:preliminary}
\end{figure}
\vspace{-0.8\baselineskip}

\begin{figure*}[t]
  \centering
  \includegraphics[width=0.9\linewidth]{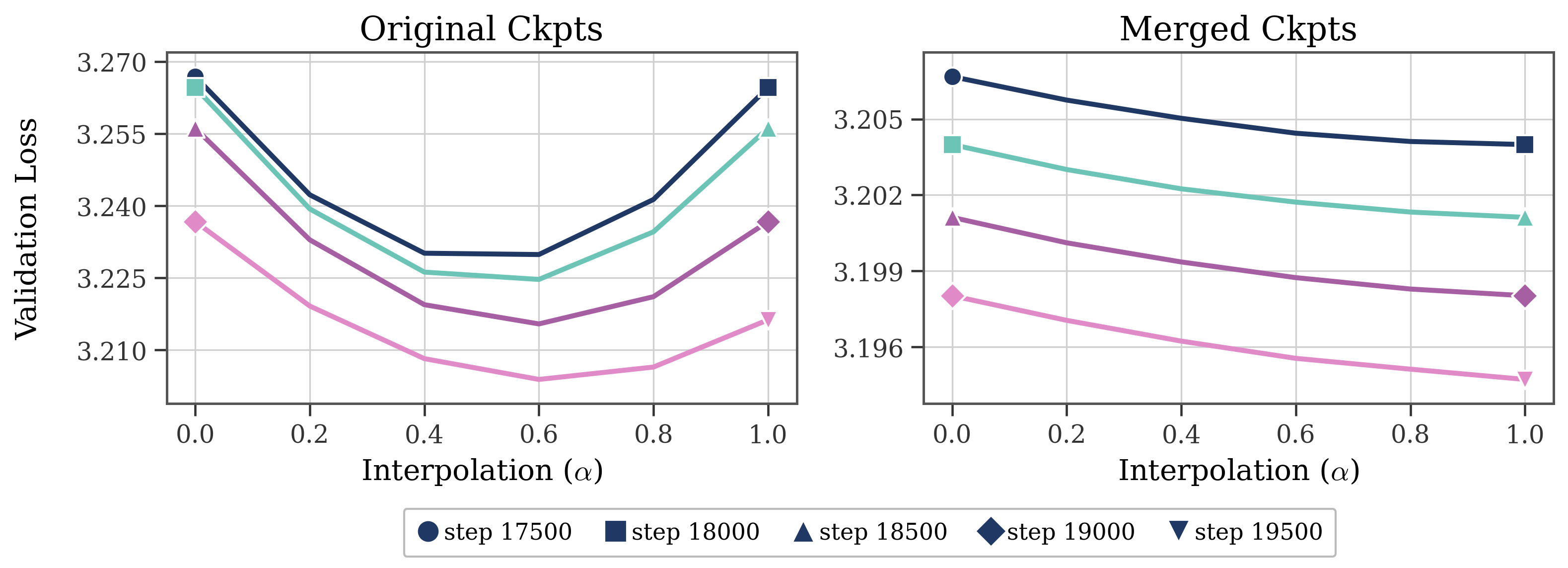}
\caption{\textbf{Geometric shift from Convexity to Monotonicity.} We visualize the loss landscape along the linear interpolation path $\theta(\alpha) = (1-\alpha)\theta_t + \alpha \theta_{t+\tau}$ between consecutive checkpoints ($\tau=500$). \textbf{Left (Raw):} The trajectory exhibits a \textit{convex basin} profile, where the midpoint loss is significantly lower than the endpoints. \textbf{Right (Merged):} The landscape transforms into a \textit{monotonic descent}, under the merging strategy of ($\tau=500, n=8$).}
\label{fig:warmup}
\vskip -0.2in
\end{figure*}

\paragraph{The LAWA Framework.}
Current research on model merging in pre-training is mainly captured by the \textit{Latest Weight Averaging} (LAWA) framework \citep{kaddour2022stop, sanyal2023early}. LAWA aggregates a sequence of $n$ checkpoints sampled with a fixed interval $\tau$, computing a weighted average:
\begin{equation}
\label{eq:lawa_general}
\theta_{\text{avg}}^{(t)} = \sum_{i=0}^{n-1} w_i \cdot \theta_{t-i\tau}, \quad \text{s.t.} \quad \sum_{i=0}^{n-1} w_i = 1,
\end{equation}
where ${w}_{i=0}^{n-1}$ are scalar weights determining the contribution of each checkpoint. This formulation allows for various weighting schemes, such as Exponential Moving Average (EMA), where $\theta_{\text{avg}}^{(t)}=\gamma\theta_t+(1-\gamma)\theta_{\text{avg}}^{(t-\tau)}$.

\paragraph{Baseline: Pre-trained Model Averaging (PMA).}
While EMA is standard in convex optimization, recent work on \textit{Pre-trained Model Averaging} (PMA) \citep{li2025model} provides compelling evidence that simple \textbf{Uniform Averaging} (i.e., $w_i = 1/n$) is superior for LLM pre-training. PMA posits that in the high-dimensional, non-convex landscape of LLMs, uniform weights effectively approximate the centroid of the local loss basin, offering better stability than EMA.

To rigorously establish our baseline, we validate this observation in Figure~\ref{fig:preliminary}. We compare the raw trajectory against LAWA instantiated with both Uniform weights (PMA) and EMA weights ($\gamma \in {0.1, 0.2}$) on GPT-2 Small. Consistent with \citet{li2025model}, Uniform Averaging yields the lowest and most stable validation loss. Consequently, we adopt PMA (Uniform Averaging) as the standard baseline throughout this work, denoted as $\theta_{\text{avg}}^{(t)} = \frac{1}{n}\sum_{i=0}^{n-1} \theta_{t-i\tau}$.

\section{Main Findings}

\subsection{Warmup: Pairwise Connectivity}
\label{subsec:warmup}

To unravel the geometric mechanism of model merging, we begin by probing the connectivity between \emph{adjacent} checkpoints. This serves as a fundamental test: if the optimization trajectory is smooth, linear interpolation between steps should yield monotonic improvement. Conversely, non-monotonicity implies curvature or oscillation.

We analyze the validation loss $\mathcal{L}(\theta(\alpha))$ along the linear path $\theta(\alpha) = (1-\alpha)\theta_t + \alpha \theta_{t+\tau}$ for $\alpha \in [0, 1]$. Figure~\ref{fig:warmup} contrasts the landscape of the raw trajectory ($\theta^{\text{raw}}t \to \theta^{\text{raw}}{t+\tau}$) against the merged trajectory ($\theta^{\text{avg}}t \to \theta^{\text{avg}}{t+\tau}$) on GPT-2 Small under the same setting of Figure~\ref{fig:preliminary}. Two distinct geometric regimes emerge:

\begin{itemize}[leftmargin=15pt]
\item \textbf{Raw checkpoints exhibit Convex Basins.}
As shown in Figure~\ref{fig:warmup} (Left), the interpolation between raw checkpoints consistently forms a U-shaped convex basin. Notably, the midpoint loss ($\alpha\approx0.5$) is significantly lower than both endpoints. This phenomenon suggests that consecutive raw steps are not moving strictly "downhill" along the valley floor. Instead, they oscillate across the high-curvature walls of the loss valley, effectively "bouncing" around the optimal path.
\item \textbf{Merged checkpoints reveal Monotonic Descent.} 
In sharp contrast, Figure~\ref{fig:warmup} (Right) reveals that the path between merged checkpoints is strictly monotonic. The loss decreases smoothly from $t$ to $t+\tau$ without the barrier or basin structure observed in the raw trajectory. This suggests that the averaging operation acts as a geometric filter, effectively canceling out the orthogonal oscillations.
\end{itemize}

\label{subsec:rank1-subspace}

\begin{figure*}[t]
  \centering
  \includegraphics[width=\linewidth]{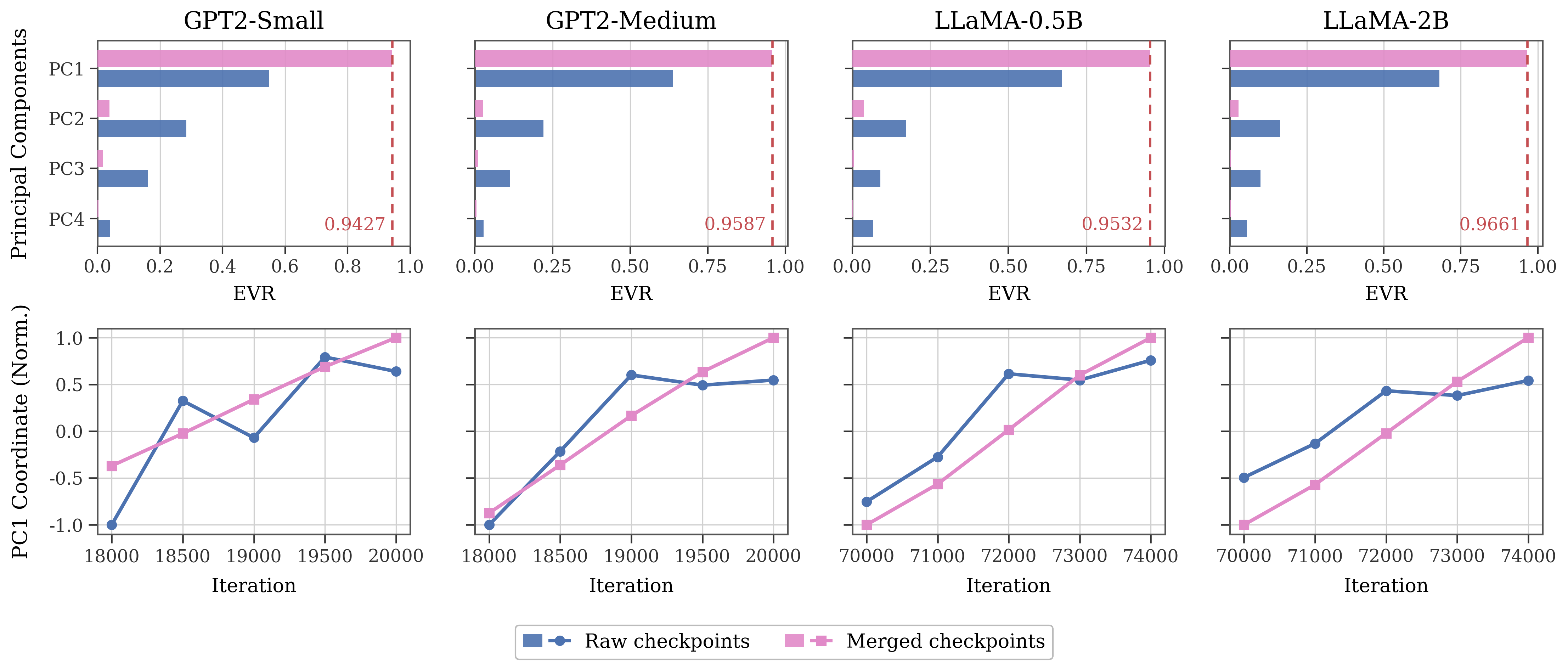} 
  \vspace{-0.6cm}
  \caption{\textbf{PCA analysis of raw and merged trajectories.} We perform PCA on sequences of $K=5$ consecutive checkpoints for GPT-2 and LLaMA models (From 124M--2B). \textbf{Top (Spectral Concentration):} Merged trajectories (pink) exhibit a dominant Rank-1 structure, with the first principal component (PC1) explaining $>94\%$ of the variance. In contrast, raw trajectories (blue) disperse energy across multiple components. \textbf{Bottom (Trajectory Rectification):} Projecting checkpoints onto PC1 reveals that merging rectifies the optimization path into a strictly monotonic flow, whereas raw steps exhibit non-monotonic. This confirms that merging collapses the trajectory onto a stable 1D linear manifold.}
  \label{fig:pca_analysis}
  \vspace{-0.3cm}
\end{figure*}

\textbf{Implication.} The shift from local convexity to monotonicity indicates that merged checkpoints do not merely reside in a lower-loss region; they lie on a geometrically distinct, smoother manifold. This implies that while the raw trajectory is chaotic, the \emph{merged} trajectory aligns with the monotonic direction, hinting at the existence of a stable, linear structure. In the following section, we extend this pairwise analysis to multiple checkpoints to uncover the vaster geometry of this trajectory

\subsection{Rank-1 Subspace of Merged Models}

To rigorously verify whether the local smoothness observed in Section~\ref{subsec:warmup} translates to a global low-dimensional manifold, we analyze the spectral geometry of trajectory segments. Specifically, we collect sequences of $K$ consecutive checkpoints, form the centered parameter matrix $\Theta \in \mathbb{R}^{d \times K}$, and perform Principal Component Analysis (PCA). We quantify the geometric structure using two metrics: the Explained Variance Ratio (EVR) of the $k$-th principal component ($R_k = \sigma_k^2 / \sum_{j} \sigma_j^2$) to measure linearity, and the Projected Coordinates along principal axes to assess monotonicity. We conduct this analysis across GPT-2 (Small/Medium) and LLaMA (0.5B/2B) families, comparing raw trajectories ${\theta^{\text{raw}}_t}$ against merged ones ${\theta^{\text{avg}}_t\text{ }(\tau=500, n=8)}$ with a default segment length of $K=5$.

\textbf{Phenomenon 1: Spectral Concentration.}
As shown in Figure~\ref{fig:pca_analysis} (Top), merged checkpoints exhibit a dominant Rank-1 structure, consistently concentrating $>94\%$ of variance in the first principal component ($R_1$) across all model scales. In contrast, raw trajectories display a dispersed spectrum with significant energy distributed across other components.
This phenomenon indicates that averaging effectively suppresses orthogonal variance, isolating a single dominant axis of variation.

\textbf{Phenomenon 2: Trajectory Rectification.}
Figure~\ref{fig:pca_analysis} (Bottom) visualizes the evolution of normalized coordinates projected onto the first principal direction $u_1$. While raw checkpoints exhibit non-monotonic oscillation, merged checkpoints evolve in a strictly monotonic, quasi-linear fashion.
which implies that the extracted principal direction is not an artifact of noise but aligns with a coherent, directed path of descent.

\begin{figure}[htbp]
  \centering
  \includegraphics[width=0.96\linewidth]{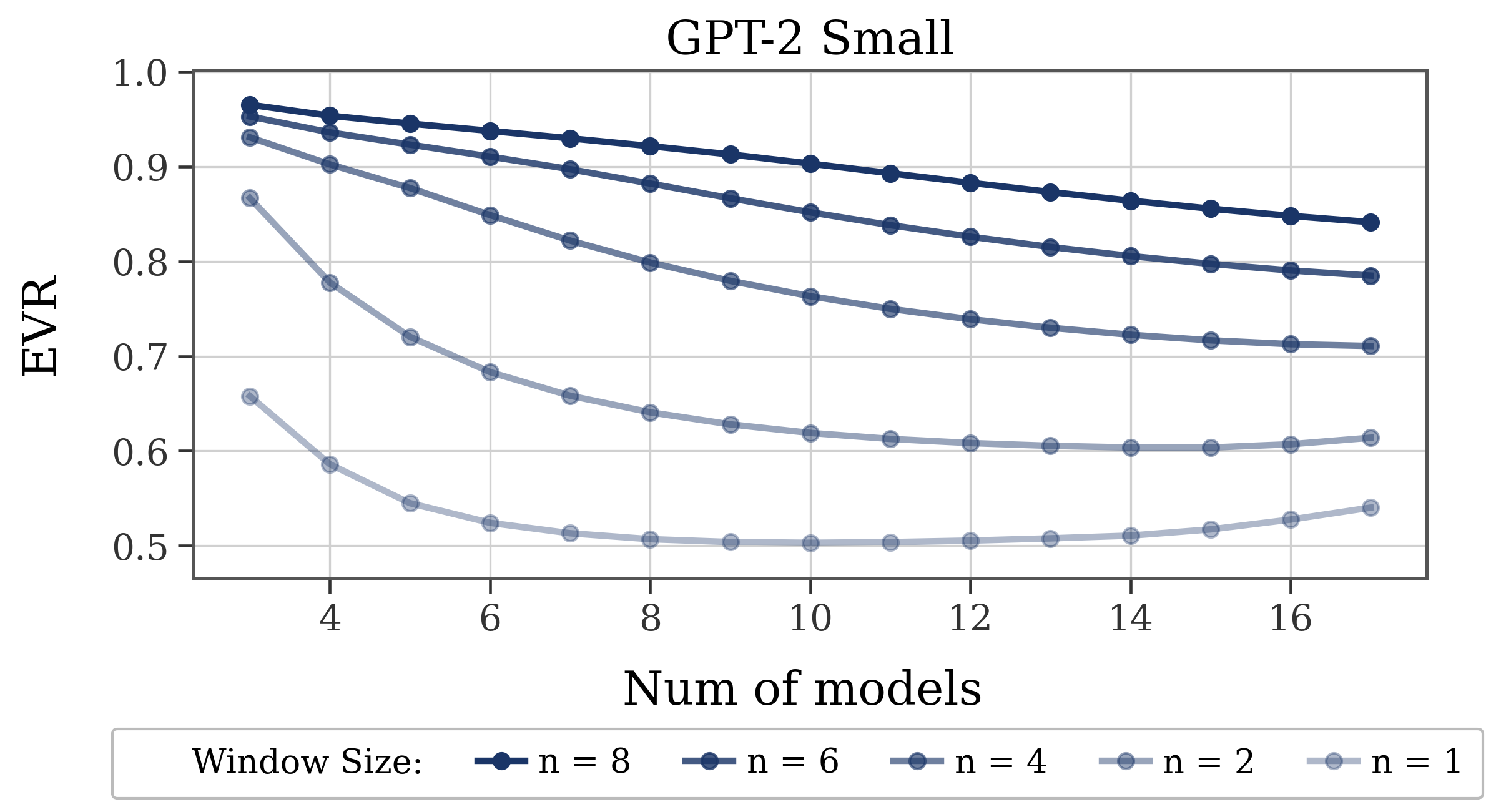} 
  \vspace{-0.2cm}
  \caption{\textbf{Robustness of the Rank-1 Subspace.} We analyze the persistence of linearity (PC1 Explained Variance Ratio) as a function of the trajectory length $K$ (number of checkpoints included in PCA), varying the merging window size $n$ in GPT-2 Small experiment. Merged trajectories with sufficient window sizes ($n > 4$) maintain a robust linear structure ($EVR > 0.8$) even over long horizons in the late-stage of training ($K=16$).}
  \label{fig:linearity_scaling}
  \vspace{-0.4cm}
\end{figure}

\begin{figure*}[htbp]
  \centering
  \begin{subfigure}[t]{0.32\linewidth}
    \centering
    \includegraphics[width=\linewidth]{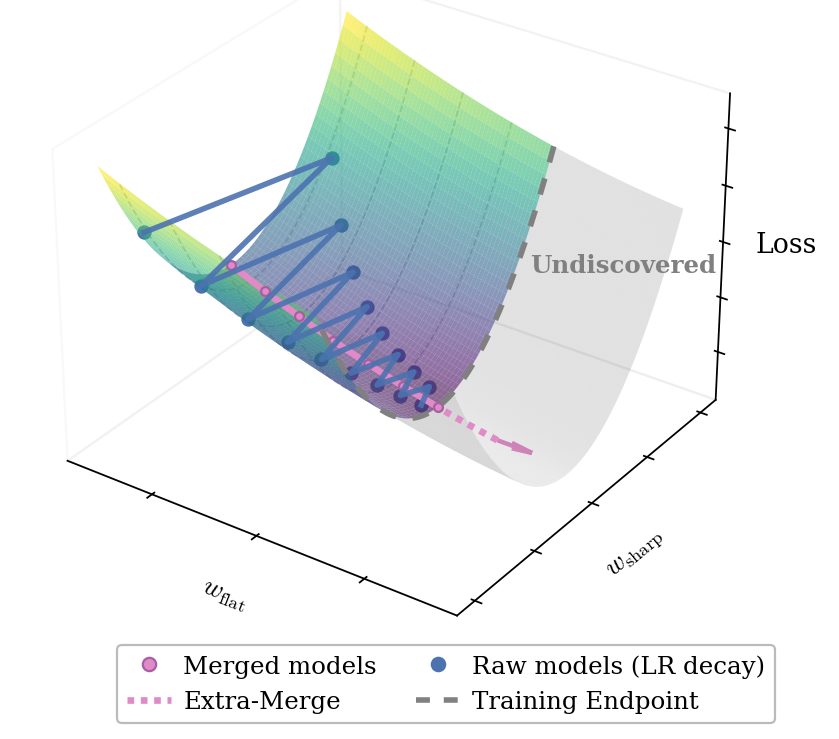}
    \caption{Geometric Intuition}
    \label{fig:schematic_left}
  \end{subfigure}
  % 去掉 \hfill
  \begin{subfigure}[t]{0.40\linewidth}
    \centering
    \includegraphics[width=\linewidth]{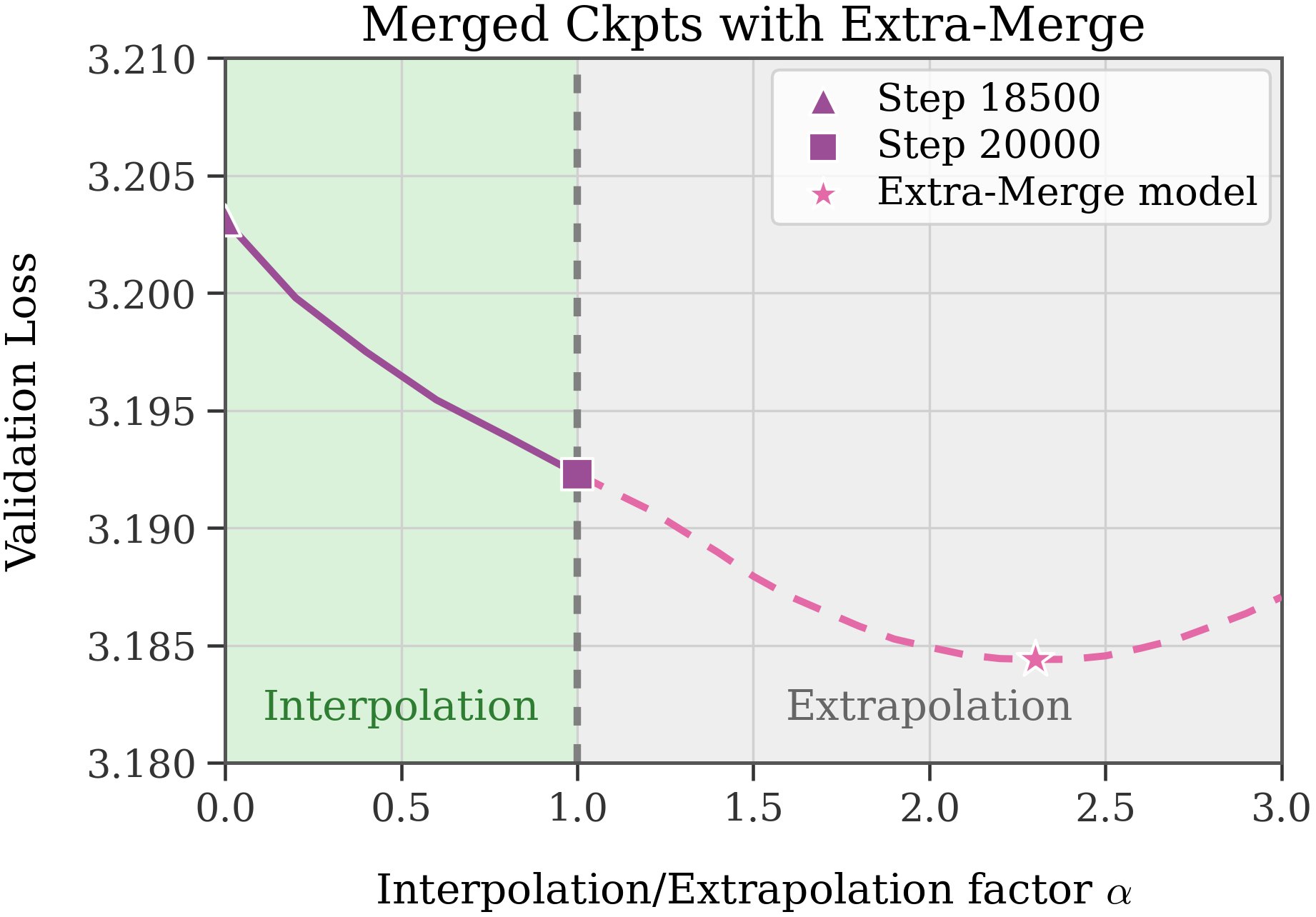}
    \caption{Empirical Validation}
    \label{fig:schematic_right}
  \end{subfigure}
  \caption{\textbf{(a)} A visual illustration of the loss landscape. While raw optimization steps (blue trajectory) oscillate violently across the high-curvature valley walls, the merged checkpoints (pink nodes) filter out these oscillations, collapsing onto the stable valley floor (the Rank-1 Subspace). Extra-Merge (dashed arrow) identifies the tangent of this subspace to extrapolate towards the "undiscovered" low-loss region.
\textbf{(b) } Validation loss profile on GPT-2 Small. The green zone represents interpolation between merged checkpoints, while the gray zone denotes the extrapolation phase. By extending along the principal direction with an adaptive step size, Extra-Merge locates a minimum (star) significantly lower than the final merged checkpoint.}
  \label{fig:schematic}
  \vspace{-0.2cm}
\end{figure*}\

\textbf{Phenomenon 3: Persistence over Horizons.}
We further assess the stability of this structure by extending the trajectory length $K$. Figure~\ref{fig:linearity_scaling} reveals that while the linearity of raw trajectories ($n=1$) decays rapidly with $K$, merged trajectories with sufficient window sizes ($n > 4$) maintain robust linearity ($R_1 > 0.8$) even over long horizons. 
This suggests that with sufficient smoothing, the Rank-1 Subspace acts as a stable, global backbone of the late-stage training dynamics.

Above findings highlight a fundamental geometric shift: model merging effectively suppresses the high-frequency oscillations of the training trajectory, collapsing the optimization path onto a \textbf{Rank-1 Subspace}: a quasi-linear 1D manifold that aligns with the primary direction of loss descent.

\textbf{Is Linearity Trivial under Smoothing?}
One might hypothesize that the observed Rank-1 structure is a trivial artifact of the sliding window, which introduces temporal autocorrelation via data overlap. However, smoothing does not inherently induce rank collapse. Consider a null hypothesis where the trajectory is an isotropic random walk in $\mathbb{R}^d$: sliding window averaging acts as a scalar contraction on the covariance eigenvalues but preserves the effective rank (i.e., the trajectory "cloud" shrinks but remains spherical). In our experiment Figure~\ref{fig:pca_analysis}, the emergence of a dominant principal component ($R_1 > 94\%$) implies a latent linear drift. 
%The averaging process functions as a geometric low-pass filter that reveals this intrinsic structure rather than manufacturing it. 
We provide a formal justification for this phenomenon under the \textbf{River-Valley} loss landscape framework in Section~\ref{sec:extra-merge}.

\section{Extra-Merge}
\label{sec:extra-merge}

Leveraging the geometric insight from Section~\ref{subsec:rank1-subspace} where merged checkpoints concentrate on a linear manifold, we propose \textbf{Extra-Merge}, a training-free algorithm designed to exploit this geometric stability. Unlike standard merging which interpolates \textit{within} the convex hull of past checkpoints, Extra-Merge extrapolates the optimization trajectory along the estimated \textit{Rank-1 Subspace} to minimize loss.
Given a sequence of merged checkpoints ${\theta^{\text{avg}}_i}$ generated by a fixed PMA schedule (interval $\tau$, window $n$), the procedure operates in two phases:

\textbf{Step 1. Extrapolation Direction Estimation.}
We apply PCA to a sliding window of the $K$ most recent merged checkpoints, denoted as $\Theta_K = [\theta^{\text{avg}}{t-K+1}, \dots, \theta^{\text{avg}}{t}]$. We extract the first principal component $u_1 \in \mathbb{R}^d$ and orient $u_1$ to align with the temporal evolution of training. We define the extrapolation direction $\hat{v} = \text{sgn}(\langle \theta^{\text{avg}}{t} - \theta^{\text{avg}}{t-1}, u_1 \rangle) \cdot u_1$, ensuring the direction points towards descent.

\textbf{Step 2. Adaptive Greedy Line Search.}
Starting from the latest merged checkpoint $\theta^{\text{avg}}{t}$, we perform a line search along $\hat{v}$. To avoid manual tuning of the search scale, we adopt an \textit{adaptive step size} strategy based on the trajectory's local velocity. Let $z_i = \theta^{\text{avg}}i \cdot \hat{v}$ be the projection coordinate. We define the base stride $\delta$ as a fraction $\alpha$ of the coordinate displacement between the last two merged states: $\delta = \alpha \cdot |z_t - z{t-1}|$, with $\alpha=0.1$ as a robust default.
The extrapolation is defined iteratively:
\begin{equation}
\theta_{\text{extra}}^{(k)} = \theta^{\text{avg}}_{t} + k \cdot \delta \cdot \hat{v}, \quad k=1, 2, \dots
\end{equation}
We evaluate the validation loss $\mathcal{L}(\theta_{\text{extra}}^{(k)})$ at each step and terminate the search immediately when the loss fails to improve relative to the anchor point (i.e., $\mathcal{L}(\theta_{\text{extra}}^{(k)}) > \mathcal{L}(\theta^{\text{avg}}_{t})$), returning the candidate with the minimal loss.

%Figure~\ref{fig:schematic} corroborates this geometric intuition. As illustrated in Panel (a), the averaging process acts as a geometric filter, dampening the high-curvature ``mountain'' oscillations of the raw optimizer to reveal the latent linear structure of the valley floor. Panel (b) provides empirical confirmation on GPT-2 Small. By estimating the subspace tangent from the last $K=4$ merged checkpoints, we observe a seamless transition from interpolation to extrapolation. The loss curve follows a smooth, quasi-quadratic descent well into the unobserved region (gray zone), validating that Extra-Merge correctly identifies the optimal descent direction hidden within the noisy optimization path.

Figure~\ref{fig:schematic} provides both a geometric interpretation and empirical verification of this procedure. 
For Figure~\ref{fig:schematic_left}, while raw optimization steps oscillate across the sharp curvature of the loss landscape, merged checkpoints effectively collapse onto the valley floor, forming a stable Rank-1 Subspace. Extra-Merge exploits this stability to extend the trajectory into the ``undiscovered'' region. 
Empirically (Figure~\ref{fig:schematic_right}), we validate this behavior on GPT-2 Small ($\tau=500,n=8$) and estimating the subspace tangent from the last $K=4$ merged checkpoints. The resulting loss curve exhibits a smooth continuation from the interpolation phase (green zone) to the extrapolation phase (gray zone), successfully locating a significantly lower-loss solution without additional gradient updates.

\subsection{Theoretical Insight}
In this section, we formalize the geometric intuition behind the \textbf{Rank-1 Subspace} and provide a theoretical justification for Extra-Merge. We demonstrate the underlying mechanism within the \emph{River-Valley} framework \citep{wen2024understanding}, begin by defining the local geometry of late-stage pre-training.
Throughout this section, $N$ corresponds to the merging window size $n$ in
PMA/LAWA, and $T$ corresponds to the checkpoint interval $\tau$ used in practice.
``River'' refers to the (approximately) flattest direction, and ``Mountains''
refer to the remaining high-curvature subspace.

\begin{assumption}[River-Valley Landscape (Assumption 2 in \cite{wen2024understanding}); Informal version of Assumptions~\ref{ass:rivervalley} and \ref{ass:quadratic}]
\label{ass:river_valley}
Consider a local neighborhood $U$ around the optimization path. We assume the loss $\mathcal{L}(\theta)$ decomposes into a 1D \emph{flat} direction (the ``river'') and a $(d-1)$-dimensional \emph{sharp} subspace (the ``mountains'').
Formally, let $v \in \mathbb{R}^d$ be the unit vector along the river, with associated projection matrices $P_F = vv^\top$ and $P_S = I - vv^\top$. For any $\theta \in U$, the loss is approximated as:
\begin{equation}
\mathcal{L}(\theta)\approx \ell\!\left(v^\top(\theta-\theta_\star)\right)
+\frac{1}{2}\,\|H^{1/2}P_S(\theta-\theta_\star)\|_2^2,
\end{equation}
where $\theta_\star$ is a reference point on the river, $\ell(\cdot)$ is a slowly varying function along $v$, and $H$ is a positive semi-definite Hessian matrix supported on the mountain subspace (i.e., $H = P_S H P_S$) and satisfies $z^\top Hz\in[\mu\|z\|_2^2,\,L_S\|z\|_2^2]$
for all $z\in\mathrm{range}(P_S)$.
\end{assumption}

Intuitively, Assumption~\ref{ass:river_valley} posits that the loss landscape resembles a narrow valley. The gradient aligns with the river direction $v$ only on the valley floor (the manifold $M$); elsewhere, it is dominated by steep components $H P_S(\theta - \theta_\star)$. Figure~\ref{fig:schematic_left} is a 3-D illustration of this landscape. Next we model the late-stage behavior of Stochastic Gradient Descent (SGD) within this landscape.

\begin{assumption}[Decoupled late-stage dynamics; Informal version of Assumptions~\ref{ass:sgd}, \ref{ass:quadratic}]
\label{ass:sgd_dynamics}
Let $v\in\mathbb R^d$ be the (locally constant) river direction,
Write the river and mountain coordinates as
$t_k := v^\top(\theta_k-\theta_\star)$ and $z_k := P_S(\theta_k-\theta_\star)$.
Assume the late-stage dynamics on a neighborhood $U$ can be approximated by
\[
t_{k+1}=t_k-\eta\big(\ell'(t_k)+\mu_F\big),\quad
z_{k+1}=(I-\eta H)z_k+\eta g_k,
\]
where $\eta\in(0,1/L_S)$, $H\succeq 0$ is supported on $\mathrm{range}(P_S)$ with
eigenvalues in $[\mu,L_S]$, and $g_k$ are i.i.d.\ zero-mean noise supported on
$\mathrm{range}(P_S)$ with $\mathrm{Cov}(g_k)=\sigma^2 P_S$
(e.g., $g_k\sim\mathcal N(0,\sigma^2 P_S)$).
\end{assumption}

We now analyze the properties of merged checkpoints under these dynamics. Let $\theta^{\text{avg}} = \frac{1}{N}\sum_{m=0}^{N-1} \theta_{t_0 + mT}$ be the merged parameter obtained by averaging $N$ checkpoints sampled every $T$ steps. Our first result characterizes how this averaging operation interacts with the landscape geometry.

%Under this setting, the raw trajectory $\theta_t$ is dominated by the noise term $\eta g_t$ in the mountain directions, obscuring the subtle drift $\mu_F$. In the following theorems, we prove how averaging recovers this signal.

\begin{figure*}[t]
\centering
%---------------- First row: 3 figures ----------------%
\begin{subfigure}{0.32\textwidth}
  \centering
  \includegraphics[width=\linewidth]{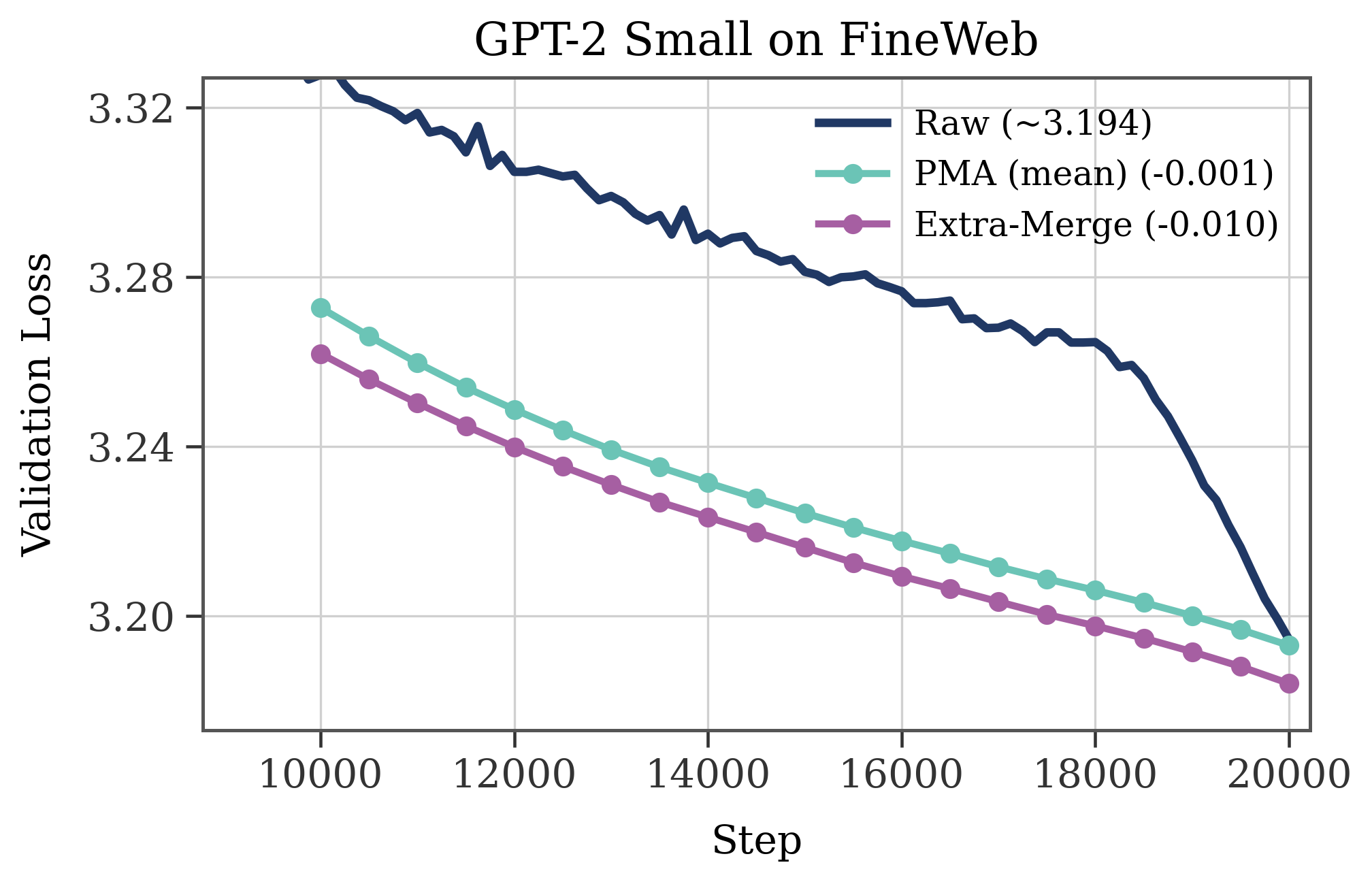}
\end{subfigure}\hspace{1mm}
\begin{subfigure}{0.32\textwidth}
  \centering
  \includegraphics[width=\linewidth]{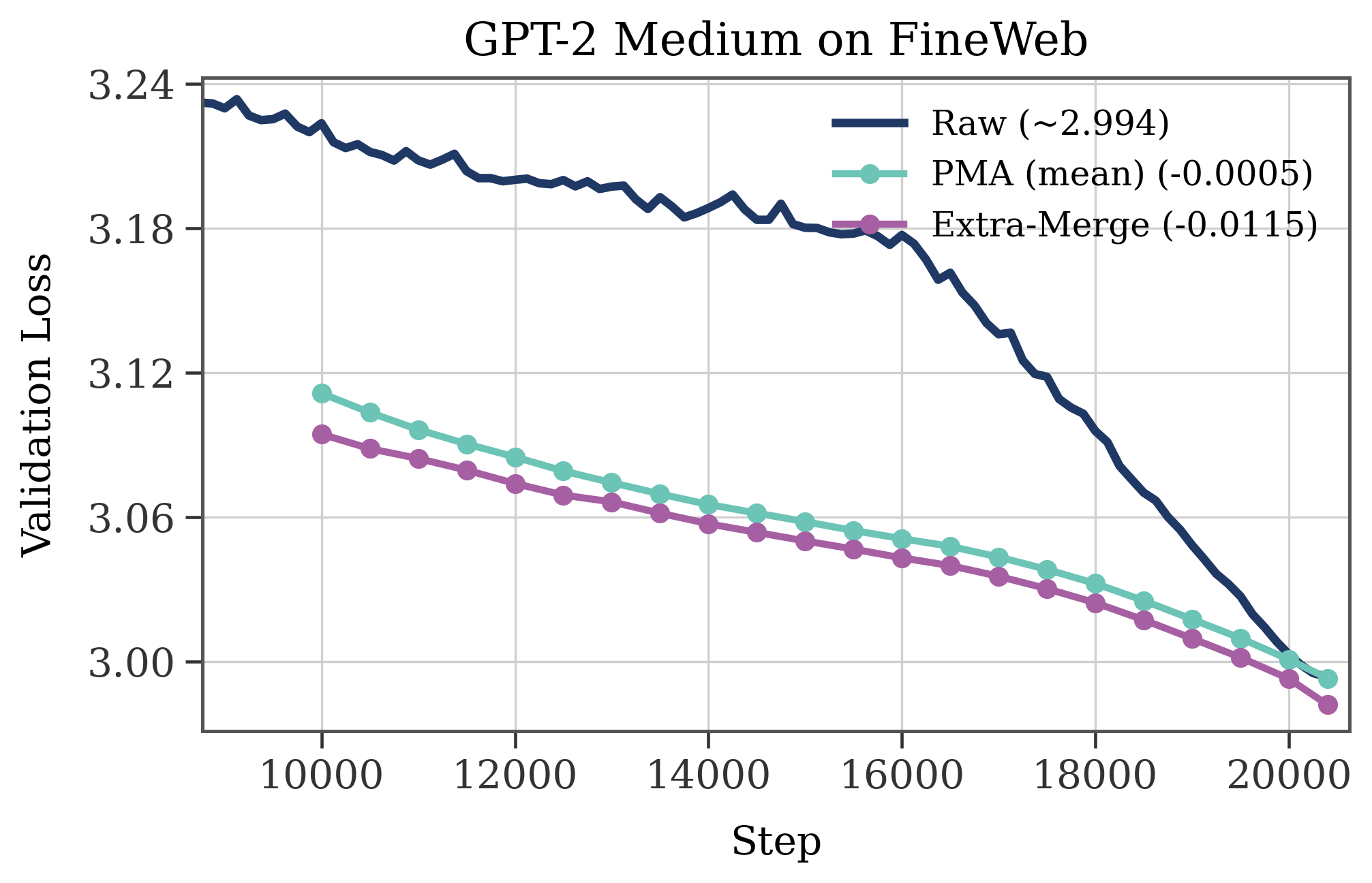}
\end{subfigure}\hspace{1mm}
\begin{subfigure}{0.32\textwidth}
  \centering
  \includegraphics[width=\linewidth]{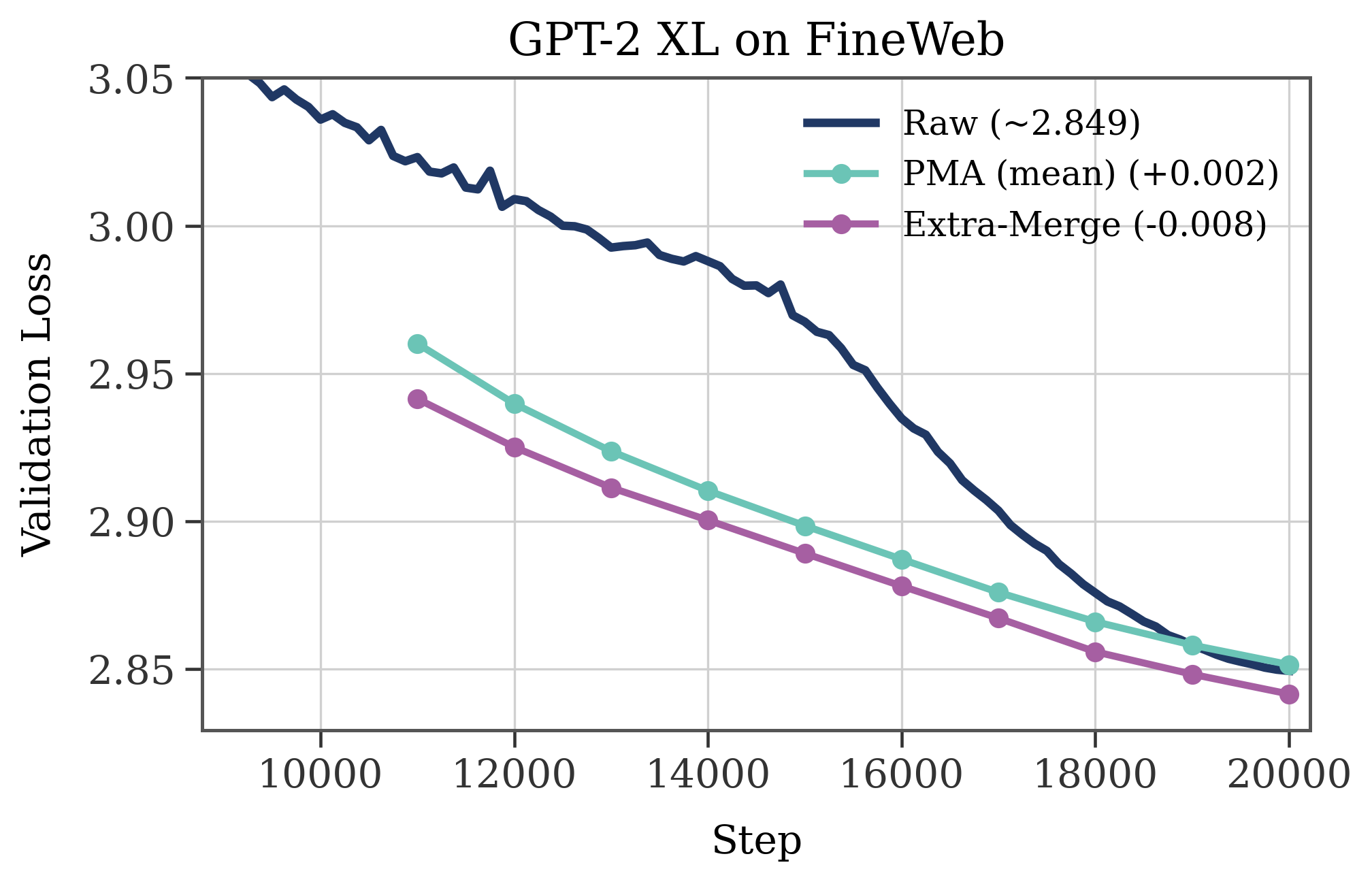}
\end{subfigure}

%---------------- Second row: 2 figures ----------------%
\begin{subfigure}{0.48\textwidth}
  \centering
  \includegraphics[width=1.02\linewidth]{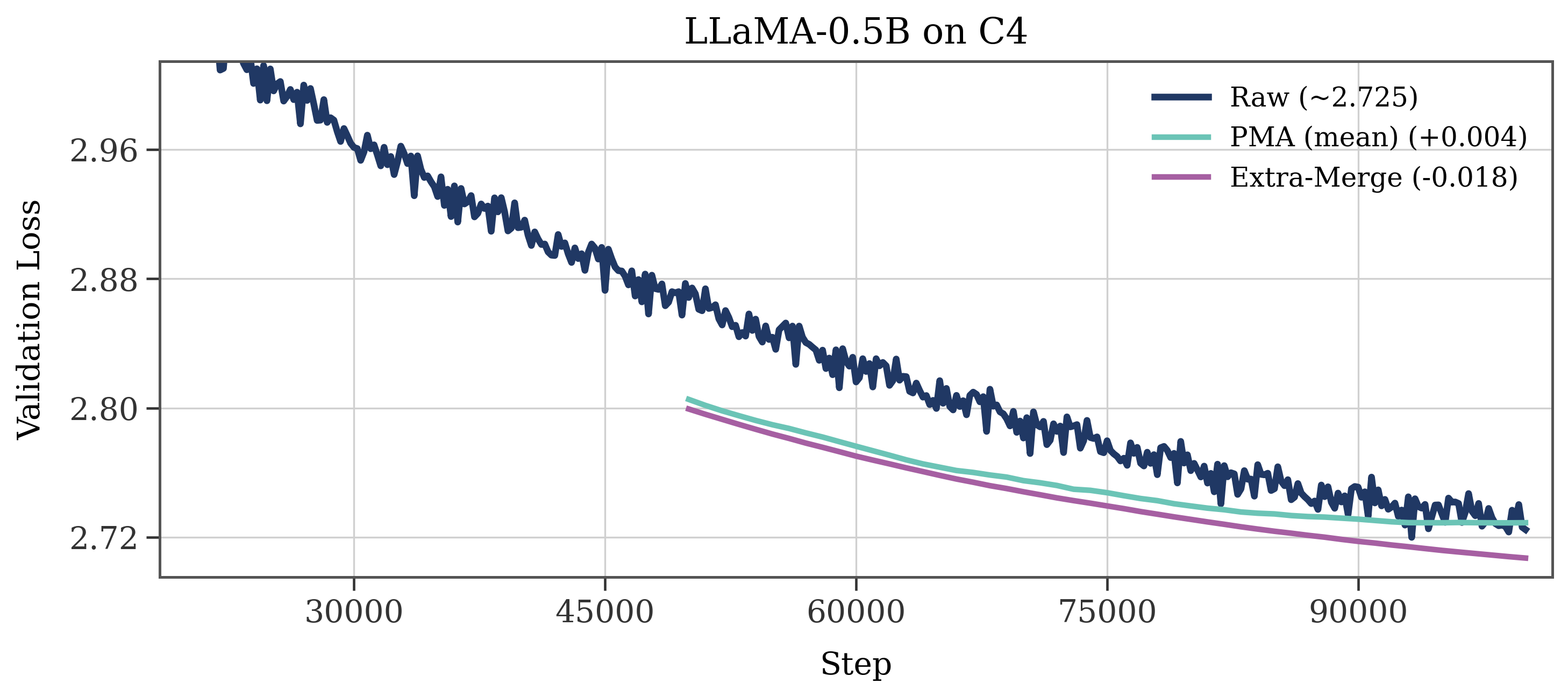}
\end{subfigure}\hspace{1mm}
\begin{subfigure}{0.48\textwidth}
  \centering
  \includegraphics[width=0.95\linewidth]{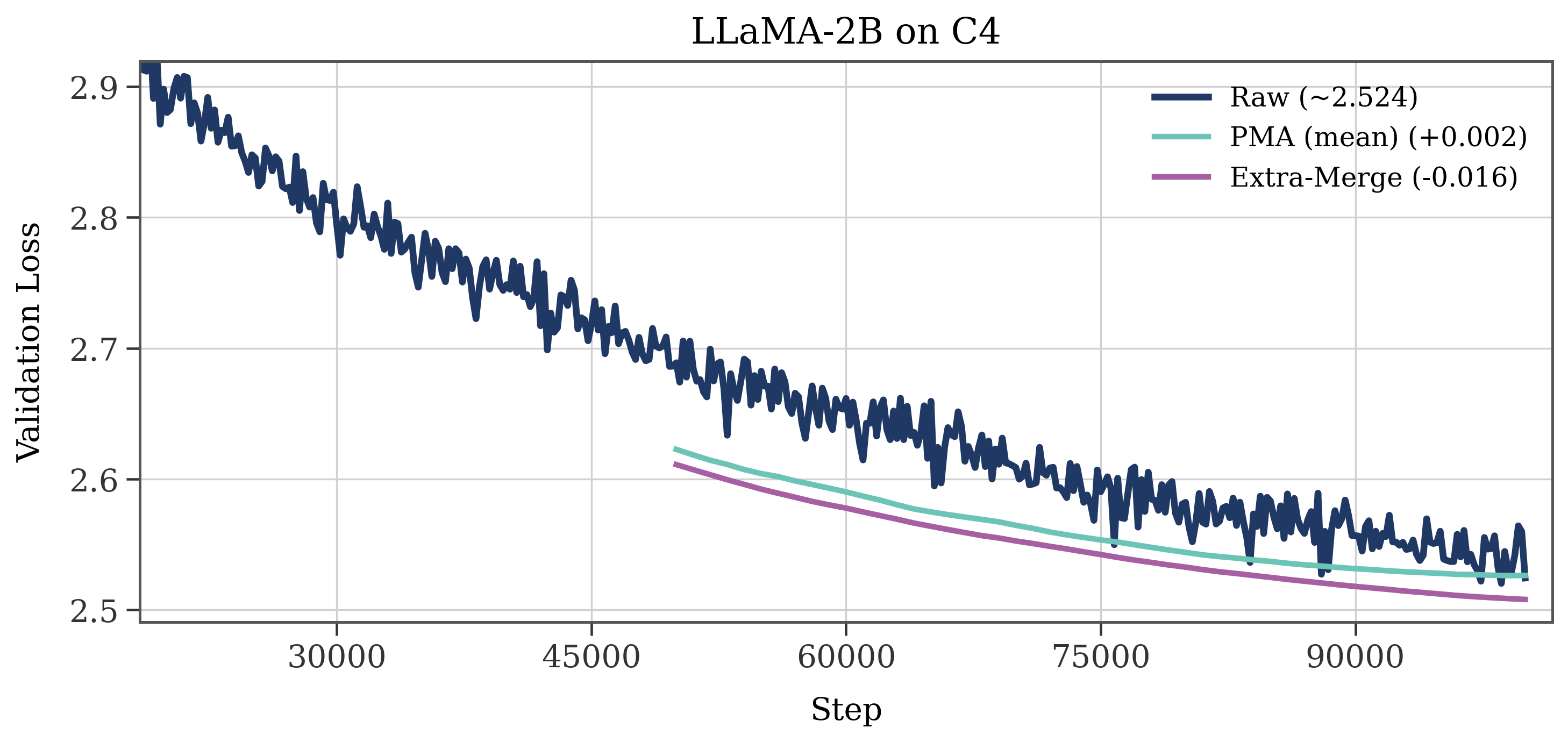}
\end{subfigure}

\caption{\textbf{Validation loss trajectories across model scales and schedules.} We compare Raw, PMA , and Extra-Merge on (First row) GPT-2 family trained with WSD scheduler, and (Last row) LLaMA family trained with Cosine scheduler. While PMA gradually aligns with the raw baseline as the learning rate decays, Extra-Merge achieve consistently lower loss.}
    \label{fig:big_figure}

    %\vspace{0.35cm}

    %---------------- Table directly below the figure ----------------%
    \centering
    \captionof{table}{\textbf{Zero-shot performance on Pythia-12B downstream tasks.} We compare the raw checkpoint (at step 140k) against various merging strategies using checkpoints from 130k to 140k. Extra-Merge consistently outperforms both the raw baseline and standard averaging methods (PMA/EMA) across all four benchmarks.}
    \label{tab:merge_results}
    \setlength{\tabcolsep}{8pt}
    \renewcommand{\arraystretch}{1.0}
    \begin{tabular}{lccccc}
        \toprule
        Task & Pythia-12B (Raw) &  PMA (EMA$_{\alpha=0.1}$) & PMA (EMA$_{\alpha=0.2}$) & PMA (average) & Extra-Merge \\
        \midrule
        Arc\_challenge & 31.66 &  32.00 & 31.91 & \underline{32.21} & \textbf{32.76} {\color{red}(+1.10)} \\
        Arc\_easy      & 69.74 & 69.11 & 69.78  & \underline{70.08} & \textbf{70.58} {\color{red}(+0.84)} \\
        Hellaswag      & 49.98 &  49.79 & 50.01 & \underline{50.10} & \textbf{50.14} {\color{red}(+0.16)} \\
        PIQA           & 76.12  & \underline{76.12} & 76.06 & 75.90 & \textbf{76.38} {\color{red}(+0.26)} \\
        \midrule
        Average        & 56.88  & 56.76 & 56.94 & \underline{57.07} & \textbf{57.47} {\color{red}(+0.59)} \\
        \bottomrule
    \end{tabular}
\end{figure*}

\begin{theorem}[Averaging reduces expected distance to the river]
\label{thm:averaging}
Let checkpoints be saved every $T$ steps and let
$\theta^{\text{avg}}$ be the uniform average of
$N$ late-stage checkpoints.
Define the distance to the river as $D(\theta):=\|P_S(\theta-\theta_\star)\|_2$.
Then
\[
\mathbb E\,D(\bar\theta)^2 \;\le\;
\frac{1}{N}\Big(1+\frac{2\varepsilon}{1-\varepsilon}\Big)
\sum_{j=1}^{d-1}\frac{\eta\sigma^2}{2\lambda_j-\eta\lambda_j^2},
\]
where $\varepsilon:=\max_j |1-\eta\lambda_j|^{T}$, $\{\lambda_j\}_{j=1}^{d-1}$ are eigenvalues of $H$ on $\mathrm{range}(P_S)$.
In particular, when $T$ is large enough such that $\varepsilon\ll 1$,
the mountain deviation decays as $\tilde{\mathcal O}(1/N)$.
\end{theorem}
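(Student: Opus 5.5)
Because the mountain coordinates evolve as a linear, time-invariant, noise-driven recursion under Assumption~\ref{ass:sgd_dynamics}, the plan is to diagonalize $H$ on $\mathrm{range}(P_S)$. Let $\{q_j\}_{j=1}^{d-1}$ be orthonormal eigenvectors with eigenvalues $\lambda_j\in[\mu,L_S]$, and set $a_j:=1-\eta\lambda_j$, so $|a_j|<1$ since $\eta<1/L_S$. Writing $z_k^{(j)}:=q_j^\top z_k$ and $\xi_k^{(j)}:=q_j^\top g_k$, the recursion splits into independent scalar AR(1) processes
\[
z_{k+1}^{(j)}=a_j z_k^{(j)}+\eta\,\xi_k^{(j)},\qquad \mathrm{Var}(\xi_k^{(j)})=\sigma^2 .
\]
Then $D(\bar\theta)^2=\|P_S(\bar\theta-\theta_\star)\|^2=\sum_j\big(\tfrac1N\sum_{m=0}^{N-1}z^{(j)}_{t_0+mT}\big)^2$, because $P_S$ is linear and commutes with averaging. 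It therefore suffices to bound the second moment of one averaged coordinate.

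I interpret ``late-stage'' as the stationary regime; alternatively I would show that a transient $a_j^{2k}$ term only improves the bound. In stationarity the variance is $s_j^2=\eta^2\sigma^2/(1-a_j^2)=\eta\sigma^2/(2\lambda_j-\eta\lambda_j^2)$, which is exactly the summand in the claim. For lag $h$ the autocovariance is $\mathrm{Cov}(z_k^{(j)},z_{k+h}^{(j)})=a_j^{h}s_j^2$, obtained by unrolling the recursion and using that future noise is zero-mean and independent of $z_k$. The mean is zero, so second moments equal variances.

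Expanding the square of the averaged coordinate gives
\[
\mathbb E\Big(\tfrac1N\sum_m z^{(j)}_{t_0+mT}\Big)^2=\frac{s_j^2}{N^2}\Big(N+2\sum_{0\le m<m'\le N-1}a_j^{(m'-m)T}\Big).
\]
I bound $|a_j^{(m'-m)T}|\le \varepsilon^{m'-m}$ with $\varepsilon=\max_j|a_j|^T<1$. For each $m$, the inner sum over $m'$ is at most $\sum_{r\ge1}\varepsilon^r=\varepsilon/(1-\varepsilon)$, so the double sum is at most $N\varepsilon/(1-\varepsilon)$. This yields $\frac{s_j^2}{N}\big(1+\frac{2\varepsilon}{1-\varepsilon}\big)$, and summing over $j$ gives the stated inequality. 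The $\tilde{\mathcal O}(1/N)$ statement follows because $\varepsilon\to0$ as $T$ grows, since $|a_j|\le 1-\eta\mu<1$.

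The main obstacle is the ``late-stage'' modeling step. If the process is not exactly stationary, $\mathrm{Var}(z_k^{(j)})$ equals $s_j^2(1-a_j^{2k})$ plus a deterministic term $a_j^{2k}(z_0^{(j)})^2$. To handle this, I would either assume stationarity or a stationary initialization, consistent with the informal Assumption~\ref{ass:sgd_dynamics}, or note that with $z_0$ started at zero or in equilibrium, all covariances are dominated by their stationary counterparts: for $k\le k'$, $\mathrm{Cov}=a_j^{k'-k}\mathrm{Var}(z_k)$ with $\mathrm{Var}(z_k)\le s_j^2$. With that in hand, the rest is the routine geometric-sum computation above.
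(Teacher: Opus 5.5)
Your proposal is correct and follows essentially the same route as the paper. You project onto the mountain subspace, diagonalize $H$ into scalar AR(1) coordinates with stationary variance $\eta\sigma^2/(2\lambda_j-\eta\lambda_j^2)$ and lag-$h$ autocovariance $a_j^{h}$ times that variance, expand the second moment of the average, and bound the cross terms by a geometric series in $\varepsilon$; the paper re-indexes by lag $r$ and uses $N-r\le N-1$, which amounts to the same as your per-$m$ tail bound, and it likewise simply assumes stationarity after a burn-in $k_0$. One small caveat: your aside that a transient term ``only improves the bound'' holds only for $z_0=0$ or an equilibrium initialization, not for an arbitrary deterministic $z_0$, and you already restrict to those cases.
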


Proof in Appendix~\ref{prf:thm1}. Theorem~\ref{thm:averaging} provides the justification for the "Rank-1 Subspace" phenomenon observed in Figure~\ref{fig:pca_analysis}. Individual checkpoints $\theta_t$ are displaced from the valley floor by a constant noise floor proportional to $\eta \sigma^2 / \lambda_{\min}$. In contrast, the averaging process suppresses this orthogonal oscillation by a factor of $1/N$. As $N$ increases, $\theta^{\text{avg}}$ collapses onto the river manifold $M$.
%This explains why the trajectory of merged checkpoints appears smooth and monotonic: they are effectively sliding along the valley floor, shielded from the chaotic dynamics of the valley walls.

Next, we address the validity of Extra-Merge. We analyze the sliding-window merging process, where we collect a sequence of $K$ merged checkpoints $\{\theta^{\text{avg}}_s\}_{s=1}^K$. We perform PCA on these centered states to derive the extrapolation direction $\hat{u}_1$.

\begin{theorem}[PCA Recovers the Descent Direction]
\label{thm:pca_alignment}
Let $\hat{u}_1$ be the first principal component of a sequence of $K$ sliding-window merged checkpoints. Define the Signal-to-Noise Ratio (SNR) of the trajectory as $\rho := \sigma_{\text{drift}}^2 / \sigma_{\text{resid}}^2$, where $\sigma_{\text{drift}}^2$ is the variance due to river descent and $\sigma_{\text{resid}}^2$ is the residual mountain noise after averaging.
If $\rho > 1$, then $\hat{u}_1$ aligns with the true river direction $v$:
\begin{equation}
\sin(\angle(\hat{u}_1, v)) \le \frac{C}{\rho - 1},
\end{equation}
where $C$ is a constant determined by the residual noise level and standard PCA estimation stability.
\end{theorem}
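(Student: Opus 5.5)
We would prove this with a signal-plus-noise decomposition of the centered data matrix, followed by a Davis--Kahan / Wedin $\sin\Theta$ perturbation bound. Under Assumption~\ref{ass:sgd_dynamics}, each merged checkpoint splits into river and mountain parts:
\[
\theta^{\text{avg}}_s-\theta_\star=\bar t_s\, v+\bar z_s,
\]
where $\bar t_s$ is the average of the river coordinates in window $s$ and $\bar z_s\in\mathrm{range}(P_S)$. Let $\Theta_K\in\mathbb R^{d\times K}$ have columns $\theta^{\text{avg}}_s-\frac1K\sum_r\theta^{\text{avg}}_r$. Then
\[
\Theta_K=v\,a^\top+Z,
\]
where $a_s=\bar t_s-\frac1K\sum_r\bar t_r$ and $Z$ collects the centered $\bar z_s$. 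The deterministic river recursion $t_{k+1}=t_k-\eta(\ell'(t_k)+\mu_F)$ with slowly varying $\ell'$ makes $\bar t_s$ nearly affine in $s$, with slope about $-\eta T(\ell'+\mu_F)$. So $a$ is a nonzero vector and $\sigma_{\text{drift}}^2:=\|a\|_2^2/K$. For the noise we set $\sigma_{\text{resid}}^2:=\|Z\|_{\op}^2/K$, or its expectation. Theorem~\ref{thm:averaging} then gives $\E\|\bar z_s\|^2=\tilde{\mathcal O}(1/N)$, so $\E\|Z\|_F^2\le K\cdot\tilde{\mathcal O}(1/N)$, which controls $\sigma_{\text{resid}}^2$.

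The first principal component $\hat u_1$ is the top left singular vector of $\Theta_K$, equivalently the top eigenvector of
\[
\frac1K\Theta_K\Theta_K^\top=\sigma_{\text{drift}}^2\,vv^\top+\frac1K\bigl(v a^\top Z^\top+Z a v^\top\bigr)+\frac1K ZZ^\top .
\]
The unperturbed matrix $\sigma_{\text{drift}}^2vv^\top$ has eigengap $\sigma_{\text{drift}}^2$. The perturbation $E$ has operator norm at most
\[
2\sigma_{\text{drift}}\sigma_{\text{resid}}+\sigma_{\text{resid}}^2 .
\]
Davis--Kahan in Weyl-corrected form, with effective gap $\sigma_{\text{drift}}^2-\|E\|$ or more carefully $\lambda_1(\cdot)-\lambda_2(\cdot)\ge\sigma_{\text{drift}}^2-\sigma_{\text{resid}}^2$, yields
\[
\sin\angle(\hat u_1,v)\le\frac{\|E v\|}{\sigma_{\text{drift}}^2-\sigma_{\text{resid}}^2}.
\]
A cleaner route uses the structure directly. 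Note that $v\perp\mathrm{range}(Z)$, because $Z$ lives in the mountain subspace. Write $\hat u_1=\cos\phi\, v+\sin\phi\, w$ with $w\perp v$. The Rayleigh quotient comparison between $v$ and $w$ then gives
\[
\tan\phi\le\frac{\|Z a\|/K}{\sigma_{\text{drift}}^2-\sigma_{\text{resid}}^2}\le\frac{\sigma_{\text{drift}}\sigma_{\text{resid}}}{\sigma_{\text{drift}}^2-\sigma_{\text{resid}}^2}=\frac{\sqrt\rho}{\rho-1}.
\]
This is of the form $C/(\rho-1)$, with $C$ absorbing $\sqrt\rho$ or, in the stated normalization, $\sigma_{\text{resid}}/\sigma_{\text{drift}}$ times stability constants. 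It needs $\rho>1$ exactly as hypothesized. For a cleaner constant one can also use $\|Za\|\le\|Z\|_{\op}\|a\|$.

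The main obstacle is making the noise term rigorous and meaningful. The sliding windows overlap, so the $\bar z_s$ are strongly correlated, and the bound on $\|Z\|_{\op}$ must come from the operator norm rather than from independence. A second subtlety is that the cross term $Za$ need not vanish. To handle it, we would first try bounding $\E\|Za\|^2$ via the autocovariance of the OU-type process $z_k$, whose correlation decays like $|1-\eta\lambda_j|^{|k-k'|}$, mirroring the $\varepsilon$-argument of Theorem~\ref{thm:averaging}. We would then convert to a high-probability statement by Markov or Gaussian concentration, defining $C$ to include these factors. The slight nonlinearity of $\bar t_s$ from $\ell'$ varying only changes $a$; $a$ stays exactly along $v$ in its $d$-dimensional embedding, so it does not affect the angle.
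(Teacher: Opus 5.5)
Your argument is correct, but it takes a different route from the paper. The paper works at the population level. It forms $\Sigma=\E[\tilde w^{(s)}\tilde w^{(s)\top}]$ and kills the river--mountain cross terms in expectation. This leaves $\Sigma=\sigma_{\mathrm{sig}}^2vv^\top+\Sigma_{\varepsilon,\mathrm{cent}}$, which has $v$ as its exact top eigenvector with gap $\delta_\Sigma=\sigma_{\mathrm{sig}}^2-\lambda_{\max}(\Sigma_{\varepsilon,\mathrm{cent}})$. The paper then applies Davis--Kahan with $\|\hat\Sigma-\Sigma\|_{\op}$ controlled by an assumed concentration bound, so its constant is $C=\|\hat\Sigma-\Sigma\|_{\op}/\sigma_{\mathrm{resid}}^2$.

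You instead work pathwise on the empirical matrix $\tfrac1K\Theta_K\Theta_K^\top$ and use the exact orthogonality $v^\top Z=0$. Min--max gives $\hat\lambda_2\le\|Z\|_{\op}^2/K$, and $v^\top(\tfrac1K\Theta_K\Theta_K^\top)v=\sigma_{\mathrm{drift}}^2$ gives $\hat\lambda_1\ge\sigma_{\mathrm{drift}}^2$. The only coupling left is the residual $Za/K$. The result is the deterministic bound $\tan\phi\le\sqrt\rho/(\rho-1)$, with no concentration or independence assumption.

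What your route buys is full rigor and an explicit constant $C=\|Za\|/(K\sigma_{\mathrm{resid}}^2)\le\sqrt\rho$. The $\sqrt\rho$ is not slack. For $K=2$ the centered columns are $\pm\tfrac12(\Delta t\,v+\Delta z)$, so the top PC is proportional to $\Delta t\,v+\Delta z$ and $\tan\phi=\|\Delta z\|/|\Delta t|=\rho^{-1/2}$ exactly. With sample-level definitions, therefore, no $\rho$-independent $C$ exists.

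The paper's $C$ hides the same term, since $\|\hat\Sigma-\Sigma\|_{\op}\ge\|P_S(\hat\Sigma-\Sigma)v\|=\|Za\|/K$. Moreover, its population noise level $\lambda_{\max}(\Sigma_{\varepsilon,\mathrm{cent}})$ can be far smaller than the realized $\|Z\|_{\op}^2/K$ when $d\gg K$. So $\delta_\Sigma>0$ by itself says little about the sample PC.

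What the paper's route buys is a clean population gap that plugs directly into Lemmas~\ref{lem:signal} and~\ref{lem:noise}. Your quantities connect equally well:
\begin{itemize}[leftmargin=*]
\item $\sigma_{\mathrm{drift}}^2$ is deterministic under the model, being the sample variance of an approximately arithmetic sequence, about $\Delta_t^2(K^2-1)/12$.
\item $\E\|Z\|_{\op}^2/K\le\E\|Z\|_F^2/K\le\E D(\bar w)^2$, because centering only lowers $\sum_s\|\varepsilon_s\|^2$.
\end{itemize}

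Two small points.
\begin{itemize}[leftmargin=*]
\item When you describe the constant, the relevant ratio is $\sigma_{\mathrm{drift}}/\sigma_{\mathrm{resid}}=\sqrt\rho$, not its reciprocal.
\item The autocovariance and concentration step you flag as the main obstacle is not needed for the inequality itself, as long as you keep $\sigma_{\mathrm{resid}}^2$ as the realized $\|Z\|_{\op}^2/K$ rather than its expectation. It is needed only to turn the bound into a probability statement in terms of Theorem~\ref{thm:averaging}.
\end{itemize}
Also, since $Za\perp v$, the cross-term perturbation has operator norm exactly $\|Za\|/K$, so the factor $2$ in your bound on $\|E\|$ is unnecessary.
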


Proof in Appendix~\ref{prf:thm2}. Theorem~\ref{thm:pca_alignment} is pivotal for Extra-Merge. It ensures that the
PCA direction extracted from merged checkpoints is not a noise artifact: it recovers
the tangent of the low-curvature subspace (the river direction $v$) once the drift-to-noise
ratio $\rho$ is above $1$. Therefore, performing a 1D line search along $\hat{u}_1$
acts as a training-free descent step on the rectified (merged) trajectory.
%a formal descent guarantee is given in Corollary~\ref{corr:descent} (in Appendix~\ref{sec:theory}).

\paragraph{Theoretical Implications for Hyperparameters.}
To make the condition $\rho > 1$ concrete, we derive explicit bounds for the signal and noise terms in Lemmas~\ref{lem:signal} and \ref{lem:noise} (due to space limit, we defer the theoretical result in Appendix~\ref{sec:theory}). In particular, the residual variance decreases roughly as $1/N$, while
the drift signal grows with the temporal span $K\tau$. As a result, the effective SNR
scales (up to constants and mild correlation effects in $\tau$) as $\rho \;\propto\; N\cdot (K\tau)^2$. This gives the following guidance:
\vspace{-0.6\baselineskip}
\begin{itemize}[leftmargin=*, topsep=2pt, itemsep=2pt]
\item \textbf{Window Size $N$ (Noise Suppression):} larger $N$ lowers the residual mountain noise floor (approximately $1/N$), making PCA less likely to lock onto noise, increasing $N$ can be helpful once the model is in a late training stage.
\vspace{-0.4\baselineskip}
\item \textbf{PCA Window $K$ (Signal Amplification):} increasing $K$ amplifies the cumulative drift along the river, improving identifiability of the subspace direction.
\vspace{-0.4\baselineskip}
\item \textbf{Interval $\tau$ (Span \& Decorrelation):} a larger $\tau$ reduces temporal correlation in the mountain dynamics, improving the effectiveness of averaging (Theorem~\ref{thm:averaging}).
\vspace{-0.4\baselineskip}
\item \textbf{Trade-off:} $K$ (and $K\tau$) should remain moderate so the local ``straight-river'' approximation stays valid; then we use $K=4$ in Figure~\ref{fig:schematic_right} for balance.
\end{itemize}

\section{Experiments}

\subsection{Experiment Settings}
\label{sec:experiment_settings}

\textbf{Models and Datasets.} 
We conduct pre-training experiments on two widely adopted decoder-only families:
\begin{itemize}[leftmargin=*, topsep=0pt, itemsep=0pt]
    \item \textbf{GPT-2 on FineWeb.} We train GPT-2 Small (124M), Medium (355M), and XL (1.55B) on the FineWeb dataset. This setup follows the highly optimized training recipe used in the nano-gpt benchmark\footnote{\url{https://github.com/KellerJordan/modded-nanogpt}}, representing a rigorous standard for small-to-medium scale pre-training.
    \item \textbf{LLaMA on C4.} To verify scalability, we train LLaMA architectures with 0.5B and 2B parameters on the Colossal Clean Crawled Corpus (C4) dataset.
\end{itemize}
\vspace{-0.4\baselineskip}
\textbf{Training Protocol.} 
All models are trained using the AdamW optimizer with hyperparameters $\beta_1=0.9, \beta_2=0.95$, and weight decay $\lambda=0.1$. To ensure our method is agnostic to the learning rate (LR) schedule, we employ two distinct strategies: the WSD (Warmup-Stable-Decay) scheduler for GPT-2, and Cosine decay for LLaMA. This diversity confirms that the observed improvements are intrinsic to the geometry of merging. Detailed hyperparameters for all runs are provided in Appendix~\ref{app:pythia_details}.

\textbf{Merging Baselines and Extra-Merge Configuration.}
We adopt average \textit{Pre-trained Model Averaging} (PMA) as our primary baseline. To ensure a competitive comparison, we tune the merging interval $\tau$ and window size $n$, setting $\tau=500, n=8$ for GPT-2 Small/Medium, and $\tau=1000, n=10$ for the larger GPT-2 XL and LLaMA models, details of the merge settings can be found in Appendix~\ref{app:exp_details}. We report results on the latter half of the training trajectory, where merging becomes effective.
For Extra-Merge, we apply PCA over a sliding window of $K=4$ consecutive checkpoints. 
%This choice is grounded in our analysis (Section~\ref{sec:extra-merge}), which indicates that a moderate $K$ effectively balances the trade-off between suppressing sharp noise and minimizing the drift of the loss valley. Notably, this configuration proves robust across all tasks without requiring further tuning.

\subsection{Main Results: Pre-training Validation Loss}
\label{subsec:main_results}

We report the validation loss trajectories for both GPT-2 and LLaMA families in Figure~\ref{fig:big_figure}, covering model scales from 124M to 2B. As illustrated, while PMA yields lower loss than the raw model during the high-learning-rate phase, its performance \textit{gradually aligns with} the raw baseline as the learning rate decays. In contrast, \textbf{Extra-Merge} achieves a sustained loss reduction across all settings. By extrapolating along the optimization trajectory, our method consistently reaches lower loss values at equivalent training steps, effectively accelerating convergence without additional gradient updates. This represents a \textit{free-lunch} gain, improving upon the raw baseline compared to PMA.

\subsection{Scaling to Downstream Tasks: Pythia-12B}
\label{subsec:pythia_downstream}

To assess the scalability and generalization capability of our method on larger-scale models and real-world tasks, we extend our evaluation to the \textbf{Pythia-12B} model. We utilize the official intermediate checkpoints\footnote{\url{https://huggingface.co/EleutherAI/pythia-12b}} ranging from step 130k to 140k, with a saving interval of 1k steps.

We compare Extra-Merge against the Raw baseline and PMA variants (Uniform and EMA) on four standard reasoning benchmarks: ARC-Challenge, ARC-Easy, HellaSwag, and PIQA. For Extra-Merge, we apply extrapolation along the principal direction extracted from the $K=4$ merged checkpoints($\tau=1000, N=10$), and using average score as the indicator.We report the accuracy in Table~\ref{tab:merge_results}.

\textbf{Results.}
As detailed in Table~\ref{tab:merge_results}, we observe that among standard merging strategies, PMA with uniform averaging generally yields the most robust improvements, outperforming EMA variants.  Extra-Merge achieves an average accuracy gain of \textbf{+0.59\%} over the raw baseline and \textbf{+0.40\%} over the best PMA setting, confirming its efficacy on downstream tasks beyond simple perplexity reduction.

\subsection{Extra-Merge on Muon Optimizer}
\label{subsec:muon_optimizer}

To demonstrate the universality of Extra-Merge across different optimizers, we extend our evaluation to the \textbf{Muon} optimizer \citep{jordan2024muon}. Unlike AdamW, Muon is characterized by its use of orthogonal updates, which induces a fundamentally different trajectory structure during training and are now widely adopted. 
\begin{figure}[htbp]
    \centering
    \includegraphics[width=0.95\linewidth]{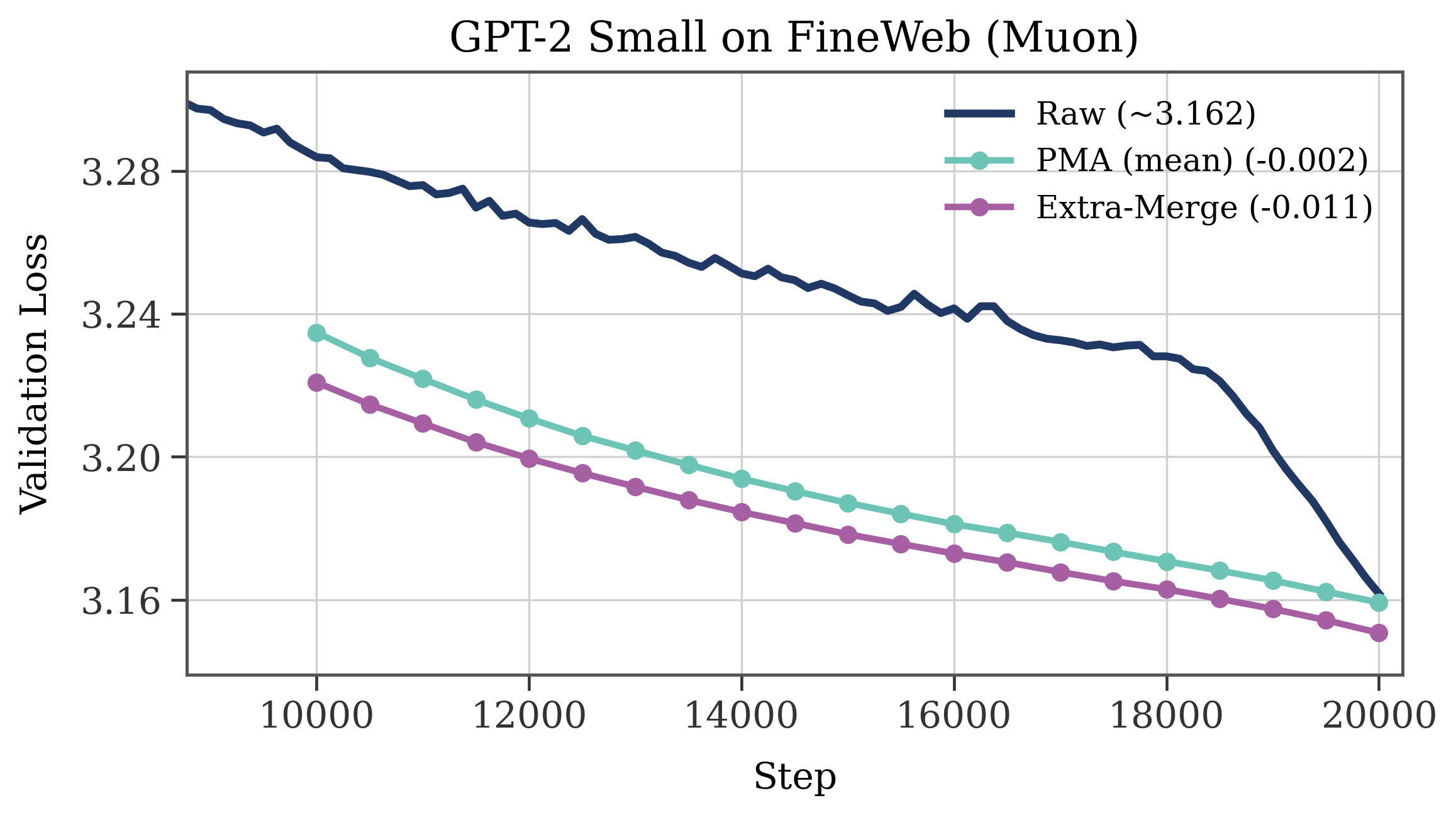} 
    \caption{\textbf{Validation loss trajectory on FineWeb using the Muon optimizer.} We compare the Raw baseline, PMA, and Extra-Merge on GPT-2 Small. Extra-Merge consistently  outperforms PMA.}
    \label{fig:muon_gpt2}
\end{figure}
\vspace{-0.6\baselineskip}

We follow the experiment setting of the standard \texttt{modded nano-gpt} configuration. We maintain the identical merging protocol used in the AdamW experiments to ensure consistency. Detailed hyperparameters and experimental settings are provided in Appendix~\ref{app:muon_details}. As illustrated in Figure~\ref{fig:muon_gpt2}, our method remains highly effective under Muon optimization. This confirms that the acceleration benefits of our method are robust to the underlying optimizer.

\section{Conclusion and Future Works}
In this work, we identified the \textbf{Rank-1 Subspace}, a phenomenon where chaotic training trajectories collapse onto a stable 1D manifold upon averaging. Leveraging this, we proposed \textbf{Extra-Merge}, a training-free extrapolation strategy that consistently improves baseline merging performance across diverse architectures (GPT-2, LLaMA, Pythia) and optimizers (AdamW, Muon).

This work offers a novel geometric perspective on the linearity of LLM loss landscape. Future research could explore two key directions: (1) developing subspace-aware optimization strategies that actively align updates with this latent manifold to accelerate convergence, and (2) analyzing how this subspace reorients under different data distributions, which could unlock geometric approaches for efficient domain adaptation and transfer learning.

\section*{Impact Statement}

This paper contributes to advancing the field of deep learning, with a specific focus on understanding the training dynamics of Large Language Models (LLMs) and improving optimization efficiency through model merging techniques. Our research utilized a computational setup consisting of 8 NVIDIA RTX 4090 GPUs and 2 NVIDIA H100 GPUs, involving approximately 100 hours of training and inference time. By detailing these experiments, we hope to provide the community with new insights into the geometry of loss landscapes and the efficacy of merging strategies. As this work focuses on fundamental optimization methodologies, we do not foresee immediate negative societal impacts; rather, we aim to facilitate the development of more efficient and accessible AI systems.

\section*{Acknowledgements}

This work was funded by the Strategic Priority Research Program of the Chinese Academy of Sciences (Grant No. XDB0680101), CAS Project for Young Scientists in Basic Research under Grant No. YSBR-034, the National Key Research and Development Program of China under Grants No. 2023YFA1011602, and Xiaomi Young Talents Program.

% In the unusual situation where you want a paper to appear in the
% references without citing it in the main text, use \nocite
%\nocite{langley00}

\bibliography{example_paper}
\bibliographystyle{icml2025}

%%%%%%%%%%%%%%%%%%%%%%%%%%%%%%%%%%%%%%%%%%%%%%%%%%%%%%%%%%%%%%%%%%%%%%%%%%%%%%%
%%%%%%%%%%%%%%%%%%%%%%%%%%%%%%%%%%%%%%%%%%%%%%%%%%%%%%%%%%%%%%%%%%%%%%%%%%%%%%%
% APPENDIX
%%%%%%%%%%%%%%%%%%%%%%%%%%%%%%%%%%%%%%%%%%%%%%%%%%%%%%%%%%%%%%%%%%%%%%%%%%%%%%%
%%%%%%%%%%%%%%%%%%%%%%%%%%%%%%%%%%%%%%%%%%%%%%%%%%%%%%%%%%%%%%%%%%%%%%%%%%%%%%%
\appendix
\onecolumn

\section{A Comprehensive Review of Model Merging Methods}
\label{app:related_work}

\textbf{Post-Fine-Tuning Merging}. This topic focuses on integrating models fine-tuned on diverse tasks into a unified entity without incurring additional inference costs. Early heuristic approaches established the baseline: Model Soups demonstrates that averaging weights from hyperparameter sweeps enhances robustness and accuracy \cite{wortsman2022model}, while Fisher-Weighted Averaging refines this by weighting parameters based on their Fisher information \cite{matena2022merging}. Task Arithmetic interprets task-specific updates as vectors, enabling capability composition via linear operations \cite{ilharco2022editing}. Addressing interference, TIES-Merging introduces trimming and sign-resolution steps to mitigate parameter conflicts \cite{yadav2023ties}, and RegMean formulates merging as a linear regression problem with a closed-form solution \cite{jin2022dataless}. Moving towards adaptive methods, AdaMerging autonomously learns merging coefficients by minimizing the distance to task models on unlabelled test data \cite{yang2023adamerging}, whereas MetaGPT frames the objective as minimizing the average loss across tasks \cite{zhang2023composing}. For automated discovery, Evolutionary Model Merge employs evolutionary algorithms to search for optimal merging recipes in large model spaces \cite{akiba2025evolutionary}. Finally, specialized techniques like DARE facilitate the merging of adapters by randomly dropping and rescaling parameters \cite{yu2024language}, and PAPA balances ensemble generalization with the efficiency of weight averaging \cite{rame2023model}.

\textbf{Pre-training Model Merging}. In contrast to the above, this topic aims to accelerate convergence and improve generalization by exploiting the trajectory of intermediate model states. Fundamental to this approach, LAWA establishes that averaging recent checkpoints significantly reduces the training time required to reach target performance \cite{kaddour2022stop}. Subsequent analysis in Early Weight Averaging reveals that models trained with high learning rates benefit disproportionately from averaging due to increased trajectory exploration \cite{sanyal2023early}. Empirical studies on Baichuan 2 validate these gains across different pre-training stages, highlighting the importance of checkpoint selection \cite{yang2023baichuan}. Theoretically, the WSM framework formalizes the relationship between learning rate schedules and model merging, proposing a "warmup-stable-merge" protocol \cite{tian2025wsm}. To optimize combination strategies, Dynamic Model Merging suggests weighting checkpoints based on generation quality metrics like perplexity, while other approaches utilize Bayesian Optimization to systematically search for the most effective weighting configurations \cite{liu2024checkpoint}.
While large-scale model merging during pre-training is a nascent frontier, our comprehensive review identifies that simple averaging strategies exemplified by Pre-trained Model Averaging (PMA) constitute the optimal baseline practice. This approach is theoretically grounded in Polyak-Ruppert averaging \cite{polyak1992acceleration}, thereby rendering our adoption of PMA as the primary baseline both sufficient and rigorous.

\section{Experiment Details}
\label{app:exp_details}

\paragraph{Overview.}
We evaluate Extra-Merge on (i) pre-training loss trajectories for GPT-2 on FineWeb and LLaMA on C4, and (ii) downstream accuracy for Pythia-12B using official intermediate checkpoints. Unless otherwise stated, we report the validation metrics of the \emph{second half} of the training trajectory, where weight averaging becomes effective.

\paragraph{Compute \& implementation.}
All experiments are run on 8$\times$RTX4090-24G and 2$\times$H100 with PyTorch 2.5.1, CUDA 12.2. Gradient clipping is set to $1.0$ unless specified.

\paragraph{Merging methods (common definitions).}
Let $\theta_t$ be the raw checkpoint saved every $\tau$ optimizer steps. We define:
\begin{itemize}[leftmargin=*, topsep=2pt, itemsep=1pt]
    \item \textbf{Raw:} use $\theta_t$ directly.
    \item \textbf{PMA (Uniform):} $\theta^{\text{PMA}}_t = \frac{1}{N}\sum_{i=0}^{N-1}\theta_{t-i\tau}$.
    \item \textbf{PMA (EMA):} $\theta^{\text{EMA}}_t = \sum_{i=0}^{N-1} w_i \theta_{t-i\tau}$, where $w_i \propto \gamma^i$ and $\sum_i w_i=1$.
    \item \textbf{Extra-Merge:} (1) compute merged checkpoints $\theta^{\text{PMA}}_t$ as above; (2) run PCA on a sliding window of $K$ consecutive merged checkpoints to obtain $\hat u_1$; (3) extrapolate
    $\theta^{\text{EM}}_t = \theta^{\text{PMA}}_t + \alpha \hat u_1$.
\end{itemize}
We select the extrapolation coefficient $\alpha$ by a 1D line search] on the validation set using candidate values $\alpha \in \{$[0,0.2,0.4,0.6,0.8,1,1.2,1.4,1.6,1.8,2.0]$\}$, and report results using the best $\alpha$ once the validation loss (indicator) raise (unless stated otherwise).

\subsection{GPT-2 on FineWeb}
\label{app:gpt2_details}

\paragraph{Models.}
We train GPT-2 Small (124M), Medium (355M), and XL (1.55B) using the standard decoder-only Transformer architecture \citep{radford2019language, vaswani2017attention}. To optimize computational efficiency, we follow the hyperparameter setting in NanoGPT speedrun benchmark.\footnote{\url{https://github.com/KellerJordan/modded-nanogpt}}, which adopt specific settings for learning rates, weight decay, and scheduler phases for each model. The architectural details and corresponding hyperparameters are summarized in Table~\ref{tab:gpt2_arch}.

\begin{table}[h]
\centering
\caption{GPT-2 architectures and hyperparameters used in our experiments. We decouple the learning rates for the language model head ($LR_{\text{head}}$) and the transformer body ($LR_{\text{body}}$). $T_{\text{warmup}}$ and $T_{\text{decay}}$ denote the number of steps for the warmup and decay phases of the WSD scheduler, respectively.}
\label{tab:gpt2_arch}
\resizebox{\textwidth}{!}{%
\begin{tabular}{lcccccccccc}
\toprule
\textbf{Model} & \textbf{Params} & $n_{\text{layer}}$ & $d_{\text{model}}$ & $n_{\text{head}}$ & $d_{\text{ff}}$ & $LR_{\text{head}}$ & $LR_{\text{body}}$ & \textbf{WD} & $T_{\text{warmup}}$ & $T_{\text{decay}}$ \\
\midrule
GPT-2 Small  & 124M  & 12 & 768  & 12 & 3072 & $3.6 \times 10^{-3}$ & $1.8 \times 10^{-3}$ & 0.0 & 250 & 2000 \\
GPT-2 Medium & 355M  & 24 & 1024 & 16 & 4096 & $3.0 \times 10^{-3}$ & $1.5 \times 10^{-3}$ & 0.125 & 500 & 4000 \\
GPT-2 XL     & 1.55B & 48 & 1600 & 25 & 6400 & $7.5 \times 10^{-4}$ & $3.75 \times 10^{-4}$ & 0.0 & 500 & 6000 \\
\bottomrule
\end{tabular}%
}
\end{table}

\paragraph{Dataset.}
We pre-train on the FineWeb dataset \citep{penedo2024fineweb}, specifically utilizing the 10B token subset consistent with the NanoGPT speedrun benchmark.
Tokenization utilizes the GPT-2 BPE tokenizer with a sequence length of $L=1024$. We report validation loss on the standard FineWeb validation split (10M tokens).

\paragraph{Training Protocol.}
We train all models for a total of 20,000 steps. The global batch size is set to 512 sequences (approx. 0.52 million tokens per step), resulting in a total training budget of approximately 10 billion tokens. We utilize mixed-precision training with \texttt{bfloat16}.

\paragraph{Optimization.}
We use the AdamW optimizer \citep{loshchilov2017decoupled} with $\beta_1=0.9$, $\beta_2=0.95$, and $\epsilon=10^{-8}$. We employ a split optimization strategy where the embeddings and language model head are trained with a higher learning rate ($LR_{\text{head}}$) compared to the transformer backbone ($LR_{\text{body}}$), specifically setting $LR_{\text{body}} = 0.5 \times LR_{\text{head}}$.
The learning rate follows a Warmup--Stable--Decay (WSD) schedule:
\begin{itemize}[leftmargin=*, topsep=2pt, itemsep=1pt]
    \item \textbf{Warmup:} Linear warmup for $T_{\text{warmup}}$ steps (see Table~\ref{tab:gpt2_arch}).
    \item \textbf{Stable:} Constant learning rate at the peak value.
    \item \textbf{Decay:} Linear decay for the final $T_{\text{decay}}$ steps, reaching a minimum learning rate of $lr_{\min} = 0.05 \times lr_{\max}$.
\end{itemize}
Weight decay is applied only to the transformer body parameters for GPT-2 Medium ($\lambda=0.125$), while it is set to 0 for Small and XL configurations based on empirical performance.

\paragraph{Merge setting.}
We save checkpoints every $\tau$ steps and apply merging on the latter half of training:
\begin{itemize}[leftmargin=*, topsep=2pt, itemsep=1pt]
    \item GPT-2 Small/Medium: $\tau=500$, $N=8$.
    \item GPT-2 XL: $\tau=1000$, $N=10$.
    \item Extra-Merge: PCA window $K=4$ consecutive merged checkpoints.
\end{itemize}
\subsection{LLaMA on C4}
\label{app:llama_details}

\paragraph{Models.}
We train LLaMA \citep{touvron2023llama} incorporating Rotary Positional Embeddings (RoPE) \citep{su2024roformer}, RMSNorm, and SwiGLU activations. We evaluate two specific scales: 0.5B and 2B. Using the code from HuggingFace Transformer Library \citep{wolf2020transformers}. The architectural configurations are detailed in Table~\ref{tab:llama_arch}.

\begin{table}[h]
\centering
\caption{LLaMA architectures and hyperparameters used in our experiments. $LR_{\max}$ denotes the peak learning rate used in the cosine schedule.}
\label{tab:llama_arch}
\begin{tabular}{lcccccc}
\toprule
\textbf{Model} & \textbf{Params} & $n_{\text{layer}}$ & $d_{\text{model}}$ & $n_{\text{head}}$ & $d_{\text{ff}}$ & $LR_{\max}$ \\
\midrule
LLaMA-0.5B & 0.5B & 15 & 1280 & 20 & 5120 & $6.0 \times 10^{-4}$ \\
LLaMA-2B   & 2B   & 30 & 2048 & 32 & 8192 & $3.0 \times 10^{-4}$ \\
\bottomrule
\end{tabular}
\end{table}

\paragraph{Dataset.}
We pre-train on the C4 (Colossal Clean Crawled Corpus) \citep{raffel2020exploring} dataset. Tokenization utilizes a SentencePiece tokenizer with a vocabulary size of $V=32,100$. The sequence length is set to $L=2048$. We follow the standard C4 validation protocol, reporting validation loss and perplexity on the official validation split.

\paragraph{Training Protocol.}
We follow the training setup of \citet{wang2025sharpness, zhao2024galore}, utilizing the AdamW optimizer with $\beta_1=0.9$, $\beta_2=0.95$, weight decay $\lambda=0.1$, and $\epsilon=10^{-8}$. Gradient clipping is set to $1.0$. The learning rate follows a cosine decay schedule:
\begin{itemize}[leftmargin=*, topsep=2pt, itemsep=1pt]
    \item \textbf{Warmup:} Linear warmup for 1,000 steps.
    \item \textbf{Peak LR:} See Table~\ref{tab:llama_arch} for model-specific values.
    \item \textbf{Decay:} Cosine decay to a terminal learning rate of $lr_{\min} = 0.1 \times lr_{\max}$.
\end{itemize}
The global batch size is set to 512 sequences. We train for a total of 100,000 steps, amounting to approximately 105 billion tokens.

\paragraph{Implementation Details.}
Models are trained using PyTorch Fully Sharded Data Parallel (FSDP). We employ a \texttt{HYBRID\_SHARD} strategy with \texttt{bfloat16} mixed precision, where parameters and buffers are maintained in FP32 while gradients and communication reductions occur in \texttt{bfloat16}. We do not utilize CPU offloading.

\paragraph{Merge setting.}
For LLaMA models we use $\tau=1000$, $N=10$, and Extra-Merge uses $K=4$ merged checkpoints.

\subsection{Pythia-12B Downstream Evaluation}
\label{app:pythia_details}

\paragraph{Model and checkpoints.}
We evaluate \textbf{EleutherAI/Pythia-12B} (GPT-NeoX style; 36 layers, model dimension 5120, 40 heads) \citep{biderman2023pythia} using the \emph{official intermediate checkpoints} hosted on Hugging Face.\footnote{\url{https://huggingface.co/EleutherAI/pythia-12b}}
We use checkpoints from step 130k to 140k (inclusive) with a 1k-step interval, i.e., 11 checkpoints in total.

\paragraph{Merging protocol on checkpoints.}
We treat each 1k training step as one unit for checkpoint indexing and apply merging with:
\[
\tau=1000\ \text{(steps)},\quad N=10,\quad K=4.
\]
We compare Raw, PMA (Uniform/EMA), and Extra-Merge. For each method, we evaluate downstream tasks on the resulting merged/extrapolated checkpoint.

\paragraph{Benchmarks and metric.}
We evaluate on ARC-Challenge \citep{clark2018think}, ARC-Easy \citep{clark2018think}, HellaSwag \citep{zellers2019hellaswag}, and PIQA \citep{bisk2020piqa}, reporting accuracy (\%). We compute the \textbf{average score} across tasks as the primary indicator.
Evaluation is implemented with the LM Evaluation Harness \citep{eval-harness}, using:
\begin{itemize}[leftmargin=*, topsep=2pt, itemsep=1pt]
    \item Few-shot setting: 0-shot
    \item Max sequence length: 2048
    \item Decoding: greedy (since these are multiple-choice log-likelihood tasks)
    \item Dtype: \texttt{float16}, batch size: 16
\end{itemize}

\subsection{Extra-Merge with Muon Optimizer}
\label{app:muon_details}

\paragraph{Setting.}
We additionally evaluate Extra-Merge under the \textbf{Muon} optimizer \citep{jordan2024muon,liu2025muon}, following the NanoGPT speedrun codebase configuration.\footnote{\url{https://github.com/KellerJordan/modded-nanogpt}}
Muon differs from AdamW by employing orthogonalized updates via Newton-Schulz iterations, which substantially changes the geometry of the training trajectory.

\paragraph{Protocol.}
We run GPT-2 Small on FineWeb with the same data pipeline and model configuration as in Section~\ref{app:gpt2_details}, replacing AdamW with Muon.
All merging hyperparameters are kept identical to the AdamW setting ($\tau=500$, $N=8$, $K=4$) to isolate the effect of the optimizer. Other Muon-specific hyperparameters follow the default recipe in the benchmark codebase, including a split optimization strategy:
\begin{itemize}[leftmargin=*, topsep=2pt, itemsep=1pt]
    \item \textbf{Transformer Body:} Optimized via Muon with learning rate $\eta=0.02$ and momentum $\mu=0.95$.
    \item \textbf{LM Head / Embeddings:} Optimized via AdamW with learning rate $\eta=0.0036$.
\end{itemize}

\subsection{Large-scale PCA via Gram Matrix (Implementation Detail)}
\label{app:gram_pca}

Directly computing PCA in parameter space is infeasible since the parameter dimension $d$ is extremely large while the number of checkpoints is small ($K\ll d$). We therefore compute the top principal component using the $K\times K$ Gram matrix.

Let $\{x_i\}_{i=1}^K$ be the centered checkpoints, where $x_i \in \mathbb{R}^d$ and $x_i = \theta_i - \frac{1}{K}\sum_{j=1}^K \theta_j$. Define the Gram matrix $G\in\mathbb{R}^{K\times K}$ as:
\[
G_{ij} = x_i^\top x_j.
\]
Let $q_1$ be the top eigenvector of $G$. The corresponding principal direction in parameter space is recovered by:
\[
\hat u_1 = \frac{\sum_{i=1}^K (q_1)_i x_i}{\left\|\sum_{i=1}^K (q_1)_i x_i\right\|_2}.
\]
This procedure is mathematically equivalent to PCA on the $d ( \gg K)$-dimensional centered matrix but costs only $\mathcal{O}(K^2)$ inner products plus $\mathcal{O}(K^3)$ eigendecomposition, which is negligible for $K=4$.

\section{Theoretical Insight: Model Merging as River Projection and Extra-Merge as Subspace Extrapolation}
\label{sec:theory}

\iffalse
\subsection{Overlook}
We provide a theoretical explanation in a river-valley loss landscape \emph{in the sense of} \citet{wen2024understanding} for two empirical phenomena behind model merging and Extra-Merge (Section~\ref{sec:extra-merge}):

\begin{enumerate}[leftmargin=15pt]
    \item \textbf{Merging (averaging) cancels ``mountain'' oscillations.}
    When SGD runs with a relatively large stable learning rate, iterates oscillate in sharp (high-curvature) directions. We prove that averaging $N$ checkpoints subsampled every $T$ steps acts as a low-pass filter and yields a merged parameter that is provably closer to the valley floor (the ``river'') than an individual checkpoint. The closeness bound explicitly depends on $(N,T)$ and the mountain eigenvalues.
    \item \textbf{Sliding-window merged checkpoints reveal a rank-1 subspace direction.}
    We then show that the sequence of sliding-window merged checkpoints concentrates around an (approximately) one-dimensional manifold. After centering, the top PCA right-singular vector aligns with the Hessian's flattest eigenvector (the river direction), with an explicit cosine-similarity lower bound depending on $(N,T,K)$ and a verifiable signal-to-noise separation condition. This justifies using the PCA direction as the Extra-Merge extrapolation direction and guarantees descent via a line search.
\end{enumerate}
\fi

\subsection{Problem setting}
Our setting follows the river-valley landscape formalism of \citet{wen2024understanding}. We restate a self-contained version with a few additional assumptions tailored to checkpoint averaging and PCA.

\paragraph{Notation.}
Let $w\in\R^d$ denote parameters and $L:\R^d\to\R$ the loss.
Let $\nabla L(w)$ and $\nabla^2 L(w)$ be the gradient and Hessian.
Let $\lambda_1(H)\ge\cdots\ge \lambda_d(H)$ and $v_1(H),\dots,v_d(H)$ denote eigenvalues/eigenvectors of a symmetric matrix $H$ in \emph{descending} order; thus $v_d(\nabla^2 L(w))$ is the eigenvector of the \emph{smallest} eigenvalue and is called the \emph{flattest direction}.
We write $v(w):=v_d(\nabla^2 L(w))$ and define the orthogonal projections
$P_F(w) := v(w)v(w)^\top$ and $P_S(w):=I-P_F(w)$,
where $F$ denotes the (one-dimensional) \emph{river direction} and $S$ denotes the $(d-1)$-dimensional \emph{mountain subspace}.

\paragraph{River-valley geometry.}
We assume a one-dimensional ``river'' manifold $M$ such that at points on $M$ the gradient aligns with the flattest direction.

\begin{assumption}[River manifold and regularity, Assumption 2 in \cite{wen2024understanding}]
\label{ass:rivervalley}
There exists a $1$-dimensional $C^2$ manifold $M\subset\R^d$ and an open neighborhood $U\supset M$ such that:
\begin{enumerate}[leftmargin=15pt]
    \item \textbf{(Alignment on the river)} For all $w\in M$, $\nabla L(w)$ is parallel to $v(w)$.
    \item \textbf{(Analyticity)} $L$ is analytic on $U$.
    \item \textbf{(Bounded Hessian)} There exists $\beta_{\max}>0$ such that $\|\nabla^2 L(w)\|_{\op}\le \beta_{\max}$ for all $w\in U$.
    \item \textbf{(Eigengap)} There exist $\beta>0$ and $\beta_{\flat}>0$ such that for all $w\in U$,
    $|\lambda_d(\nabla^2L(w))|<\beta_{\flat}$ and $\lambda_{d-1}(\nabla^2L(w))>\beta+4\beta_{\flat}$.
    \item \textbf{(Slow spinning)} There exist $\rho\in[0,0.01)$ and $0<\nu_{\min}\le \nu$ such that for all $w\in U$,
    $\nu_{\min} < \|\nabla L(w)\|_2^2 \le \nu$ and $\|\nabla v(w)\|_{\op}\le \rho\cdot \frac{\beta}{2\nu}$.
\end{enumerate}
\end{assumption}

%\paragraph{Straight-river reduction.}
To obtain closed-form $(N,T)$-dependent bounds, we adopt the standard ``straight river'' simplification.

\begin{assumption}[Straight river on $U$]
\label{ass:straight}
On $U$, $v(w)\equiv v$ is constant, hence $P_F(w)\equiv P_F:=vv^\top$ and $P_S(w)\equiv P_S:=I-vv^\top$ are constant. The river is the affine line $M=\{w_\star+\alpha v:\alpha\in\R\}$ for some reference point $w_\star\in M$.
\end{assumption}

%\paragraph{SGD dynamics and noise.}
We model SGD with stochasticity only in mountain directions and a persistent, slow drift along the river. This drift accounts for the variance in the PCA signal.

\begin{assumption}[SGD with drift and mountain-only noise]
\label{ass:sgd}
Let $\eta\in(0,2/L_S)$ be a constant learning rate. The SGD iterates satisfy $w_{k+1}=w_k-\eta \nabla L(w_k) - \eta \mu_F v + \eta g_k$, where $(g_k)_{k\ge 0}$ are i.i.d.\ with $g_k\sim\mathcal{N}(0,\sigma^2 P_S)$. The term $-\eta \mu_F v$ represents a small, constant drift along the river direction, with $\mu_F > 0$. This captures the effect of a persistent, non-zero expected gradient component along the river, typical in late-stage training. We assume iterates remain in $U$.
\end{assumption}

%\paragraph{A locally quadratic mountain model.}This assumption enables an exactly solvable linear system for the mountain dynamics.

\begin{assumption}[Quadratic mountain component on $U$]
\label{ass:quadratic}
On $U$, the loss decomposes as $L(w)=\ell(P_F(w-w_\star)) + \frac{1}{2} \|H^{1/2} P_S(w-w_\star)\|_2^2$, where $\ell:\R\to\R$ is $C^2$ and $H$ is a symmetric matrix satisfying $HP_F=0$, $HP_S=P_SH=H$, and its spectrum on $\mathrm{range}(P_S)$ is contained in $[\mu, L_S]$ for some $0<\mu\le L_S$.
\end{assumption}

\paragraph{Decoupled dynamics.}
The straight-river and quadratic-mountain assumptions imply that the SGD dynamics
separate into a one-dimensional river recursion and a $(d-1)$-dimensional mountain recursion.
In particular, the river coordinate $t_k$ evolves without depending on the mountain state $z_k$,
and the mountain recursion depends only on $z_k$ and the projected noise.
Under Assumptions~\ref{ass:straight}--\ref{ass:quadratic}, writing
$t_k := v^\top (w_k-w_\star)$ and $z_k := P_S(w_k-w_\star)$, the updates satisfy
\[
t_{k+1}=t_k-\eta\,\ell'(t_k)-\eta\mu_F,\qquad
z_{k+1}=(I-\eta H)z_k+\eta g_k,
\]
with $g_k\sim\mathcal N(0,\sigma^2P_S)$.

\paragraph{Checkpoint protocol.}
We save checkpoints every $T$ steps: $w^{(m)} := w_{k_0+mT}$ for $m=0,1,2,\dots$, starting after $k_0$.

\subsection{Theorem 1: Averaging Reduces Mountain Oscillations}
\label{prf:thm1}
We first formalize the distance to the river manifold. Under Assumption~\ref{ass:straight}, this distance is simply the norm of the projection onto the mountain subspace.

\begin{definition}[River deviation]
\label{def:dev}
Under Assumption~\ref{ass:straight}, the river deviation of a point $w$ is $D(w):=\|P_S(w-w_\star)\|_2=\mathrm{dist}(w,M)$.
\end{definition}

For the sake of readability, we restate Theorem~\ref{thm:averaging} as below.

\begin{theorem}[Averaging reduces expected distance to the river]
\label{thm:avg}
Assume Assumptions~\ref{ass:straight}--\ref{ass:quadratic}. Let $\eta\in(0,2/L_S)$ and let checkpoints $\{w^{(m)}\}_{m=0}^{N-1}$ be taken after a burn-in period $k_0$ such that the mountain dynamics are stationary. Let $\bar w = \frac{1}{N}\sum_{m=0}^{N-1} w^{(m)}$ be the merged checkpoint. Then the expected squared river deviation of $\bar w$ is
\begin{equation}
\E\,D(\bar w)^2 \;=\; \sum_{j=1}^{d-1} \left(\frac{\eta\sigma^2}{2\lambda_j-\eta\lambda_j^2}\right) \cdot \frac{1}{N^2}\left[N+2\sum_{r=1}^{N-1}(N-r)\big(1-\eta\lambda_j\big)^{rT}\right],
\end{equation}
where $\{\lambda_j\}_{j=1}^{d-1}$ are the eigenvalues of $H$ on $\mathrm{range}(P_S)$.
In particular, if checkpoints are weakly correlated such that $|1-\eta\lambda_j|^T\le \varepsilon$ for all $j$, then
\begin{equation}
\E\,D(\bar w)^2 \;\le\; \frac{1}{N} \left(1 + \frac{2\varepsilon}{1-\varepsilon}\right) \sum_{j=1}^{d-1}\frac{\eta\sigma^2}{2\lambda_j-\eta\lambda_j^2}.
\end{equation}
\end{theorem}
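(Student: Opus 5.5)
The plan is to work entirely in the eigenbasis of $H$ restricted to $\mathrm{range}(P_S)$, where the mountain recursion $z_{k+1}=(I-\eta H)z_k+\eta g_k$ decouples into $d-1$ independent scalar AR(1) processes. Let $H=\sum_{j=1}^{d-1}\lambda_j u_ju_j^\top$ with orthonormal $u_j\in\mathrm{range}(P_S)$, and set $z_k^{(j)}:=u_j^\top z_k$. Then
\[
z_{k+1}^{(j)}=(1-\eta\lambda_j)z_k^{(j)}+\eta\,u_j^\top g_k .
\]
Since $\mathrm{Cov}(g_k)=\sigma^2P_S$, the innovations $u_j^\top g_k$ are i.i.d.\ $\mathcal N(0,\sigma^2)$ and independent across $j$. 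The river coordinate $t_k$ does not enter $D$, because $P_S$ annihilates $v$. Linearity of the average gives $P_S(\bar w-w_\star)=\frac1N\sum_{m}z_{k_0+mT}$. Hence
\[
\E D(\bar w)^2=\sum_{j=1}^{d-1}\E\Big(\tfrac1N\sum_{m=0}^{N-1}z^{(j)}_{k_0+mT}\Big)^2 ,
\]
and it suffices to compute one scalar term.

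For the scalar term, write $a_j:=1-\eta\lambda_j$. The condition $\eta<2/L_S$ gives $|a_j|<1$, so a stationary law exists. Its variance $s_j^2$ solves $s_j^2=a_j^2s_j^2+\eta^2\sigma^2$, which yields
\[
s_j^2=\frac{\eta^2\sigma^2}{1-(1-\eta\lambda_j)^2}=\frac{\eta\sigma^2}{2\lambda_j-\eta\lambda_j^2}.
\]
Under stationarity (the burn-in assumption), unrolling the recursion gives the autocovariance $\Cov(z^{(j)}_k,z^{(j)}_{k+h})=a_j^{h}s_j^2$ for $h\ge0$, since future noise is independent of the present. Expanding the square of the average then gives
\[
\frac{s_j^2}{N^2}\Big[N+2\sum_{r=1}^{N-1}(N-r)a_j^{rT}\Big],
\]
where the coefficient $N-r$ counts the pairs $(m,m')$ with $m'-m=r$. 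Summing over $j$ yields the exact identity.

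For the bound, I would bound the bracket in absolute value using $|a_j|^{rT}\le\varepsilon^r$ and $N-r\le N$:
\[
N+2\sum_{r\ge1}(N-r)\varepsilon^r\le N\Big(1+\frac{2\varepsilon}{1-\varepsilon}\Big).
\]
This uses $\varepsilon<1$, which follows from $|a_j|<1$. Dividing by $N^2$ gives the stated $\frac1N(1+\frac{2\varepsilon}{1-\varepsilon})$ factor.

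The main obstacle is not computational but conceptual: making the stationarity hypothesis precise. I would state that $z_{k_0}$ is drawn from (or has converged to) the stationary Gaussian law $\mathcal N(0,\sum_j s_j^2u_ju_j^\top)$ and is independent of the future noise $g_{k_0},g_{k_0+1},\dots$. With a nonzero initial mean or covariance, extra transient terms of order $a_j^{k_0}$ appear. I would note in a remark that these terms vanish geometrically in $k_0$, so the identity holds up to $O(\max_j|a_j|^{k_0})$ corrections in the non-stationary case. The sign of $a_j$ may be negative when $\eta\lambda_j>1$, which is why the bound uses $|a_j|$. This is harmless, since the exact formula keeps the signed powers and the bound only needs absolute values.
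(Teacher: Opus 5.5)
Your proposal is correct and follows essentially the same route as the paper: you decouple the mountain recursion into scalar AR(1) coordinates along the eigenbasis of $H$, use the stationary variance $\eta\sigma^2/(2\lambda_j-\eta\lambda_j^2)$ and lag autocovariance $a_j^{h}s_j^2$, re-index the double sum by lag, and finish with a geometric-series bound. The differences are cosmetic. You bound $N-r\le N$ directly, whereas the paper passes through $N-r\le N-1$ before relaxing to the same constant. You also make explicit two requirements the paper's argument uses tacitly: a zero-mean stationary initialization, and $\varepsilon<1$.
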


\begin{proof}

Define the mountain coordinate $z_k := P_S(w_k-w_\star)$. We project the SGD update from Assumption~\ref{ass:sgd} onto the mountain subspace $\mathrm{range}(P_S)$:
\begin{align*}
z_{k+1} &= P_S(w_{k+1}-w_\star) \\
&= P_S \left( (w_k-w_\star) - \eta \nabla L(w_k) - \eta \mu_F v + \eta g_k \right) \\
&= P_S(w_k-w_\star) - \eta P_S(\nabla L(w_k)) - \eta \mu_F P_S(v) + \eta P_S(g_k).
\end{align*}
Under our assumptions:
\begin{itemize}
    \item From Assumption~\ref{ass:quadratic}, $\nabla L(w) = \ell'(P_F(w-w_\star))v + H P_S(w-w_\star)$. Thus, $P_S(\nabla L(w_k)) = P_S(\ell'(\cdot)v) + P_S(H z_k) = 0 + H z_k = H z_k$.
    \item From Assumption~\ref{ass:straight}, $P_S v = 0$.
    \item From Assumption~\ref{ass:sgd}, $g_k$ is supported on $\mathrm{range}(P_S)$, so $P_S g_k = g_k$.
\end{itemize}
Substituting these into the update of $z_k$:
\begin{equation*}
z_{k+1} = z_k - \eta H z_k + \eta g_k = (I-\eta H)z_k + \eta g_k.
\end{equation*}
Let $A := I-\eta H$. The condition $\eta \in (0, 2/L_S)$ ensures that the eigenvalues of $A$ on $\mathrm{range}(P_S)$ are in $(-1,1)$.

Let $\{q_j\}_{j=1}^{d-1}$ be an orthonormal eigenbasis for $H$ on $\mathrm{range}(P_S)$ with eigenvalues $\lambda_j$. Let $u_k^{(j)} := q_j^\top z_k$ be the coordinate of $z_k$ along $q_j$. Projecting the dynamics of $z_k$ onto $q_j$:
\begin{align*}
u_{k+1}^{(j)} &= q_j^\top z_{k+1} = q_j^\top \left( (I-\eta H)z_k + \eta g_k \right) \\
&= q_j^\top (I-\eta H) z_k + \eta q_j^\top g_k \\
&= (1-\eta\lambda_j) q_j^\top z_k + \eta q_j^\top g_k \\
&= (1-\eta\lambda_j) u_k^{(j)} + \eta \xi_k^{(j)},
\end{align*}
where $\xi_k^{(j)} := q_j^\top g_k$. Since $g_k \sim \mathcal{N}(0, \sigma^2 P_S)$ and $\{q_j\}$ is an orthonormal basis for $\mathrm{range}(P_S)$, the variables $\xi_k^{(j)}$ are i.i.d. $\mathcal{N}(0, \sigma^2)$. This is an AR(1) process\footnote{AutoRegressive model of order 1} with coefficient $a_j := 1-\eta\lambda_j \in (-1,1)$.

For an AR(1) process $x_{t+1} = a x_t + \epsilon_t$ with i.i.d. noise $\epsilon_t$ where $\E[\epsilon_t]=0, \Var(\epsilon_t)=\sigma_\epsilon^2$, the stationary variance is $\Var(x) = \sigma_\epsilon^2 / (1-a^2)$. For our process $u_k^{(j)}$, the noise term is $\eta \xi_k^{(j)}$, so its variance is $\Var(\eta \xi_k^{(j)}) = \eta^2\sigma^2$.
\begin{equation*}
\Var(u_k^{(j)}) = \frac{\eta^2\sigma^2}{1 - a_j^2} = \frac{\eta^2\sigma^2}{1 - (1-\eta\lambda_j)^2} = \frac{\eta^2\sigma^2}{1 - (1 - 2\eta\lambda_j + \eta^2\lambda_j^2)} = \frac{\eta\sigma^2}{2\lambda_j - \eta\lambda_j^2}.
\end{equation*}
The autocovariance at lag $\Delta \ge 0$ is $\Cov(u_k^{(j)}, u_{k+\Delta}^{(j)}) = a_j^\Delta \Var(u_k^{(j)})$.

Now we formalize the variance of the averaged checkpoint.
Let $\bar z = \frac{1}{N}\sum_{m=0}^{N-1} z^{(m)}$, where $z^{(m)}=z_{k_0+mT}$. The expected squared deviation is $\E D(\bar w)^2 = \E\|\bar z\|_2^2$. By orthogonality of the basis $\{q_j\}$, this is $\sum_{j=1}^{d-1} \E[(\bar u^{(j)})^2]$. Since the process is stationary and zero-mean, $\E[(\bar u^{(j)})^2] = \Var(\bar u^{(j)})$.
\begin{align*}
\Var(\bar u^{(j)}) &= \Var\left(\frac{1}{N}\sum_{m=0}^{N-1} u^{(j)}_{k_0+mT}\right) = \frac{1}{N^2} \sum_{m=0}^{N-1}\sum_{n=0}^{N-1} \Cov(u^{(j)}_{k_0+mT}, u^{(j)}_{k_0+nT}) \\
&= \frac{1}{N^2} \sum_{m=0}^{N-1}\sum_{n=0}^{N-1} a_j^{|m-n|T} \Var(u^{(j)}) \\
&= \frac{\Var(u^{(j)})}{N^2} \left( \sum_{m=n} a_j^{0 \cdot T} + \sum_{m\neq n} a_j^{|m-n|T} \right).
\end{align*}
The double summation can be re-indexed by the lag $r = |m-n|$.
\begin{itemize}
    \item For lag $r=0$ (diagonal terms, $m=n$), there are $N$ terms, each equal to $a_j^0=1$.
    \item For lag $r \in \{1, \dots, N-1\}$, there are $2(N-r)$ pairs $(m,n)$ such that $|m-n|=r$.
\end{itemize}
Thus, the sum becomes:
\begin{equation*}
\Var(\bar u^{(j)}) = \frac{\Var(u^{(j)})}{N^2} \left( N + 2\sum_{r=1}^{N-1} (N-r) a_j^{rT} \right).
\end{equation*}
Summing over $j$ and substituting the expression for $\Var(u^{(j)})$ gives the first result.
Then, we use the triangle inequality and the sum of a geometric series. Note that $|a_j| = |1-\eta\lambda_j| < 1$.
\begin{align*}
\frac{1}{N^2}\left|N+2\sum_{r=1}^{N-1}(N-r)a_j^{rT}\right| &\le \frac{1}{N^2}\left(N+2\sum_{r=1}^{N-1}(N-r)|a_j|^{rT}\right) \\
&\le \frac{1}{N^2}\left(N+2(N-1)\sum_{r=1}^{N-1}(|a_j|^T)^r\right) \\
&\le \frac{1}{N^2}\left(N+2(N-1)\sum_{r=1}^{\infty}(|a_j|^T)^r\right) \\
&= \frac{1}{N^2}\left(N+2(N-1)\frac{|a_j|^T}{1-|a_j|^T}\right).
\end{align*}
Using the condition $|a_j|^T \le \varepsilon$:
\begin{align*}
\Var(\bar u^{(j)}) &\le \frac{\Var(u^{(j)})}{N^2}\left(N+2(N-1)\frac{\varepsilon}{1-\varepsilon}\right) \\
&= \Var(u^{(j)}) \left(\frac{1}{N} + \frac{2(N-1)}{N^2}\frac{\varepsilon}{1-\varepsilon}\right). \\
&\le \frac{1}{N} \left(1 + \frac{2\varepsilon}{1-\varepsilon}\right) \Var(u^{(j)}).
\end{align*}
Summing over $j$ yields the second result.
\end{proof}

\subsection{Theorem 2: PCA on Sliding-Window Merges Recovers the River Direction}
\label{prf:thm2}
We analyze a sequence of $K$ merged checkpoints from sliding windows: $\bar w^{(s)} := \frac{1}{N}\sum_{i=0}^{N-1} w^{(s+i)}$. We perform PCA on the centered sequence $\tilde w^{(s)} := \bar w^{(s)} - \frac{1}{K}\sum_{r=0}^{K-1}\bar w^{(r)}$. For the sake of readability, we restate Theorem~\ref{thm:pca_alignment} in a slightly more detailed form as below.

\begin{theorem}[PCA direction aligns with the flattest eigenvector]
\label{thm:pca}
Assume Assumptions~\ref{ass:straight}--\ref{ass:quadratic}. Decompose each merged state as $\bar w^{(s)} = w_\star + t_s v + \varepsilon_s$. Let $\Sigma = \E[\tilde w^{(s)}\tilde w^{(s)\top}]$ be the population covariance. Let $\sigma_{\mathrm{sig}}^2 := \Var(\tilde t_s)$ and $\sigma_{\mathrm{noise}}^2 := \lambda_{\max}(\E[\tilde \varepsilon_s \tilde \varepsilon_s^\top])$.
If the signal-to-noise gap $\delta_\Sigma := \sigma_{\mathrm{sig}}^2 - \sigma_{\mathrm{noise}}^2 > 0$, then $v$ is the unique top eigenvector of $\Sigma$.

Furthermore, under a standard concentration assumption for the sample covariance $\hat\Sigma$, with probability at least $1-\alpha$:
\begin{equation}
|\hat u_1^\top v|^2 \ge 1 - \left(\frac{\|\hat\Sigma - \Sigma\|_{\op}}{\delta_\Sigma}\right)^2,
\end{equation}
where $\hat u_1$ is the top eigenvector of $\hat\Sigma$.
\end{theorem}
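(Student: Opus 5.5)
The argument has two parts: a deterministic eigenstructure statement about the population covariance $\Sigma$, and a Davis--Kahan style perturbation step for the sample covariance $\hat\Sigma$.

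\textbf{Step 1: structure of $\Sigma$.} Under Assumption~\ref{ass:straight} each merged state splits as $\bar w^{(s)}=w_\star+t_s v+\varepsilon_s$ with $\varepsilon_s=P_S(\bar w^{(s)}-w_\star)\in\mathrm{range}(P_S)$. After centering, $\tilde w^{(s)}=\tilde t_s v+\tilde\varepsilon_s$ with $\tilde\varepsilon_s\perp v$. By Assumption~\ref{ass:sgd} and the decoupled dynamics, the river coordinate $t_k$ evolves deterministically, independently of the noise. Hence $\tilde t_s$ is deterministic, and the cross terms $\E[\tilde t_s\tilde\varepsilon_s]=\tilde t_s\E[\tilde\varepsilon_s]$ vanish because the mountain process is zero-mean and stationary. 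This is the same AR(1) stationarity used in the proof of Theorem~\ref{thm:avg}. We interpret the expectation as also averaging over the window index $s$, so $\sigma_{\mathrm{sig}}^2$ is the empirical variance of $\tilde t_s$. This gives
\[
\Sigma=\sigma_{\mathrm{sig}}^2\,vv^\top+\Sigma_\varepsilon,\qquad \Sigma_\varepsilon=P_S\Sigma_\varepsilon P_S .
\]
So $\Sigma$ is block diagonal with respect to $\mathrm{span}(v)\oplus\mathrm{range}(P_S)$. The vector $v$ is an eigenvector with eigenvalue $\sigma_{\mathrm{sig}}^2$, and every other eigenvalue is at most $\lambda_{\max}(\Sigma_\varepsilon)=\sigma_{\mathrm{noise}}^2$. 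The condition $\delta_\Sigma>0$ therefore makes $v$ the unique top eigenvector, with eigengap $\lambda_1(\Sigma)-\lambda_2(\Sigma)\ge\delta_\Sigma$.

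\textbf{Step 2: perturbation.} Next apply the Davis--Kahan $\sin\Theta$ theorem to $\hat\Sigma=\Sigma+E$ with $E=\hat\Sigma-\Sigma$. The Yu--Wang--Samworth variant gives $\sin\angle(\hat u_1,v)\le 2\|E\|_{\op}/\delta_\Sigma$. The plain Davis--Kahan form, with the gap measured between $\lambda_1(\hat\Sigma)$ and $\lambda_2(\Sigma)$, instead gives $\sin\angle\le\|E\|_{\op}/(\delta_\Sigma-\|E\|_{\op})$, valid when $\|E\|_{\op}<\delta_\Sigma$. To match the stated constant $1$, I would use a direct Rayleigh-quotient argument. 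Write $\hat u_1=c\,v+s\,w$ with $w\perp v$, unit norm, and $c^2+s^2=1$. Maximality of $\hat u_1$ gives $\hat u_1^\top\hat\Sigma\hat u_1\ge v^\top\hat\Sigma v$. Expanding with the block structure of $\Sigma$ yields
\[
s^2\delta_\Sigma\le \hat u_1^\top E\hat u_1-v^\top Ev\le 2\|E\|_{\op}\,|s|,
\]
after bounding the difference of the $E$-quadratic forms by its dependence on $s$. Dividing gives $|s|\le 2\|E\|_{\op}/\delta_\Sigma$, and then $|\hat u_1^\top v|^2=1-s^2$. A constant factor may need to be absorbed or conceded here.

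\textbf{Step 3: probability.} The stated $1-\alpha$ probability is inherited entirely from the assumed concentration of $\|\hat\Sigma-\Sigma\|_{\op}$; no new probabilistic work is needed. For the main-text form in Theorem~\ref{thm:pca_alignment}, set $\rho=\sigma_{\mathrm{sig}}^2/\sigma_{\mathrm{noise}}^2$. Then $\delta_\Sigma=\sigma_{\mathrm{noise}}^2(\rho-1)$, so $\sin\angle\le C/(\rho-1)$ with $C$ proportional to $\|E\|_{\op}/\sigma_{\mathrm{noise}}^2$.

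\textbf{Main obstacle.} The hard step is getting exactly the constant $1$ in the displayed bound. The sharp quadratic-form comparison only controls $E$'s contribution up to a factor involving $|s|$ and the cross term $v^\top E w$. If the sharp constant fails, I would state the bound with $2\|E\|_{\op}$, or with $\delta_\Sigma-\|E\|_{\op}$ in the denominator, and note that this is what the main-text constant $C$ absorbs. A secondary subtlety is the meaning of the ``population covariance'' of a deterministic drift. I would handle it by treating $s$ as uniformly random over the window, which makes $\sigma_{\mathrm{sig}}^2$ the empirical variance of the $t_s$.
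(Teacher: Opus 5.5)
Your Step 1 is the paper's argument. You use the same splitting $\tilde w^{(s)}=\tilde t_s v+\tilde\varepsilon_s$, the same vanishing cross terms, and the same block form $\Sigma=\sigma_{\mathrm{sig}}^2vv^\top+\Sigma_\varepsilon$ with $\Sigma_\varepsilon v=0$. Your reading of the expectation as also averaging over $s$ is the coherent one. For fixed $s$ the noiseless river coordinate would give $\Var(\tilde t_s)=0$, and the paper's signal-variance lemma silently does the same averaging.

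The gap is in Step 2, and the paper shares it. The paper simply quotes Davis--Kahan as $\sin\angle(\hat u_1,v)\le\|\hat\Sigma-\Sigma\|_{\op}/(\lambda_1(\Sigma)-\lambda_2(\Sigma))$ and squares. You are right to distrust that: with only the population gap it fails for general perturbations. Take $\Sigma=\mathrm{diag}(1,0)$ and $E=\mathrm{diag}(-0.6,0.6)$; then $\hat u_1\perp v$ even though $\|E\|_{\op}/\delta_\Sigma=0.6$. Your Rayleigh comparison, via $\hat u_1^\top E\hat u_1-v^\top Ev=(\hat u_1-v)^\top E(\hat u_1+v)$, correctly gives $|s|\le 2\|E\|_{\op}/\delta_\Sigma$. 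It cannot do better, because it only uses that $\hat u_1$ beats $v$, not that $\hat u_1$ is an eigenvector. So as it stands your proposal proves a weaker bound, not the displayed one.

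The missing idea is that the model itself rules out such perturbations, and your Step 1 already contains the ingredient. Since $g_k$ lives in $\mathrm{range}(P_S)$, the $\tilde t_s$ are deterministic. Hence $v^\top\hat\Sigma v=\frac1K\sum_s\tilde t_s^2=\sigma_{\mathrm{sig}}^2$, i.e.\ $v^\top Ev=0$; a $1/(K-1)$ normalisation gives $v^\top Ev\ge 0$, which is enough. The counterexample above has $v^\top Ev<0$, which is exactly what is excluded here. Therefore $\hat\lambda_1:=\lambda_1(\hat\Sigma)\ge v^\top\hat\Sigma v\ge\sigma_{\mathrm{sig}}^2$. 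Now apply $P_S$ to the eigen-equation $\hat\Sigma\hat u_1=\hat\lambda_1\hat u_1$ and use $P_S\Sigma=\Sigma_\varepsilon=\Sigma_\varepsilon P_S$:
\[
(\hat\lambda_1 I-\Sigma_\varepsilon)\,P_S\hat u_1=P_S E\hat u_1 .
\]
On $\mathrm{range}(P_S)$, the operator $\hat\lambda_1 I-\Sigma_\varepsilon$ has all eigenvalues at least $\sigma_{\mathrm{sig}}^2-\sigma_{\mathrm{noise}}^2=\delta_\Sigma$. So $\sin\angle(\hat u_1,v)=\|P_S\hat u_1\|_2\le\|E\|_{\op}/\delta_\Sigma$, and squaring gives exactly the stated inequality with constant $1$. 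This is the genuine Davis--Kahan theorem with the mixed gap $\lambda_1(\hat\Sigma)-\lambda_2(\Sigma)$. Here that gap is at least $\delta_\Sigma$ precisely because the river direction carries no sampling noise.
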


\begin{proof}
The centered merged state is $\tilde w^{(s)} = \tilde t_s v + \tilde \varepsilon_s$, where $\tilde t_s$ and $\tilde \varepsilon_s$ are the centered versions of $t_s$ and $\varepsilon_s$. The population covariance matrix is:
\begin{align*}
\Sigma &= \E[\tilde w^{(s)}(\tilde w^{(s)})^\top] = \E\left[(\tilde t_s v + \tilde \varepsilon_s)(\tilde t_s v + \tilde \varepsilon_s)^\top\right] \\
&= \E\left[ \tilde t_s^2 vv^\top + \tilde t_s v \tilde\varepsilon_s^\top + \tilde\varepsilon_s \tilde t_s v^\top + \tilde\varepsilon_s \tilde\varepsilon_s^\top \right] \\
&= \E[\tilde t_s^2] vv^\top + \E[\tilde t_s v \tilde\varepsilon_s^\top] + \E[\tilde\varepsilon_s \tilde t_s v^\top] + \E[\tilde\varepsilon_s \tilde\varepsilon_s^\top].
\end{align*}
By construction, $\varepsilon_s := P_S(\bar w^{(s)}-w_\star)$ lies in the mountain subspace
$\mathrm{range}(P_S)$, hence it is orthogonal to $v$, i.e., $v^\top \varepsilon_s = 0$.
Centering is a linear operation, so $\tilde\varepsilon_s$ also lies in $\mathrm{range}(P_S)$ and satisfies $v^\top \tilde\varepsilon_s = 0$.
Therefore, since $\tilde t_s$ and $\tilde\varepsilon_s$ are independent:
\begin{equation*}
\E[\tilde t_s v \tilde\varepsilon_s^\top] = v \E[\tilde t_s \tilde\varepsilon_s^\top] = v \E[\tilde t_s] \E[\tilde\varepsilon_s^\top] = v \cdot 0 \cdot \mathbf{0}^\top = \mathbf{0}.
\end{equation*}

Then the covariance matrix simplifies to a block-diagonal structure:
\begin{equation*}
\Sigma = \E[\tilde t_s^2] vv^\top + \E[\tilde\varepsilon_s \tilde\varepsilon_s^\top].
\end{equation*}
Since $\E[\tilde t_s]=0$, we have $\E[\tilde t_s^2] = \Var(\tilde t_s) =: \sigma_{\mathrm{sig}}^2$. Let $\Sigma_{\varepsilon, \mathrm{cent}} := \E[\tilde\varepsilon_s \tilde\varepsilon_s^\top]$. The covariance matrix is:
\begin{equation*}
\Sigma = \sigma_{\mathrm{sig}}^2 vv^\top + \Sigma_{\varepsilon, \mathrm{cent}}.
\end{equation*}

The noise component $\varepsilon_s = P_S(\bar w^{(s)} - w_\star)$ lies in the mountain subspace $\mathrm{range}(P_S)$, which is orthogonal to $v$. The centered noise $\tilde\varepsilon_s$ also lies in this subspace. Therefore, the covariance $\Sigma_{\varepsilon, \mathrm{cent}}$ is supported on $\mathrm{range}(P_S)$, which implies $\Sigma_{\varepsilon, \mathrm{cent}} v = \mathbf{0}$.
Let's test $v$ as an eigenvector of $\Sigma$:
\begin{equation*}
\Sigma v = (\sigma_{\mathrm{sig}}^2 vv^\top + \Sigma_{\varepsilon, \mathrm{cent}})v = \sigma_{\mathrm{sig}}^2 v(v^\top v) + \Sigma_{\varepsilon, \mathrm{cent}} v = \sigma_{\mathrm{sig}}^2 v(1) + \mathbf{0} = \sigma_{\mathrm{sig}}^2 v.
\end{equation*}
So, $v$ is an eigenvector with eigenvalue $\sigma_{\mathrm{sig}}^2$.
Now consider any unit vector $u \perp v$. Such a vector $u$ lies in $\mathrm{range}(P_S)$.
\begin{equation*}
\Sigma u = (\sigma_{\mathrm{sig}}^2 vv^\top + \Sigma_{\varepsilon, \mathrm{cent}})u = \sigma_{\mathrm{sig}}^2 v(v^\top u) + \Sigma_{\varepsilon, \mathrm{cent}} u = \sigma_{\mathrm{sig}}^2 v(0) + \Sigma_{\varepsilon, \mathrm{cent}} u = \Sigma_{\varepsilon, \mathrm{cent}} u.
\end{equation*}
This shows that the action of $\Sigma$ on the subspace orthogonal to $v$ is identical to the action of $\Sigma_{\varepsilon, \mathrm{cent}}$. The eigenvalues of $\Sigma$ are therefore $\{\sigma_{\mathrm{sig}}^2\} \cup \{\text{eigenvalues of } \Sigma_{\varepsilon, \mathrm{cent}}\}$.
The largest eigenvalue of $\Sigma_{\varepsilon, \mathrm{cent}}$ is $\lambda_{\max}(\Sigma_{\varepsilon, \mathrm{cent}}) =: \sigma_{\mathrm{noise}}^2$.
Thus, the eigenvalues of $\Sigma$ are $\lambda_1(\Sigma)=\sigma_{\mathrm{sig}}^2$ and $\lambda_2(\Sigma) = \sigma_{\mathrm{noise}}^2$ (assuming no other eigenvalue of $\Sigma_{\varepsilon, \mathrm{cent}}$ is larger).
The condition $\delta_\Sigma = \sigma_{\mathrm{sig}}^2 - \sigma_{\mathrm{noise}}^2 > 0$ is precisely the condition that $\lambda_1(\Sigma) > \lambda_2(\Sigma)$. By the properties of symmetric matrices, this makes the top eigenvector $v$ unique.Then using Davis-Kahan $\sin\Theta$ theorem, we have:
\begin{equation*}
\sin\angle(\hat u_1, v) \le \frac{\|\hat\Sigma - \Sigma\|_{\op}}{\lambda_1(\Sigma) - \lambda_2(\Sigma)}.
\end{equation*}
In our notation, the eigengap is $\lambda_1(\Sigma) - \lambda_2(\Sigma) = \sigma_{\mathrm{sig}}^2 - \sigma_{\mathrm{noise}}^2 = \delta_\Sigma$. So,
\begin{equation*}
\sin\angle(\hat u_1, v) \le \frac{\|\hat\Sigma - \Sigma\|_{\op}}{\delta_\Sigma}.
\end{equation*}
The squared cosine of the angle is what we need:
\begin{equation*}
|\hat u_1^\top v|^2 = \cos^2\angle(\hat u_1, v) = 1 - \sin^2\angle(\hat u_1, v) \ge 1 - \left(\frac{\|\hat\Sigma - \Sigma\|_{\op}}{\delta_\Sigma}\right)^2.
\end{equation*}
This completes the proof.
\end{proof}

\paragraph{Analysis of the Signal-to-Noise Gap.}
The condition $\delta_\Sigma > 0$ is crucial. We now connect $\sigma_{\mathrm{sig}}^2$ and $\sigma_{\mathrm{noise}}^2$ to the hyperparameters $(N,T,K)$.

\begin{lemma}[Residual mountain energy]
\label{lem:noise}
Under the conditions of Theorem~\ref{thm:avg}, the residual noise variance is bounded by:
$
\sigma_{\mathrm{noise}}^2 = \lambda_{\max}(\Sigma_{\varepsilon, \mathrm{cent}}) \le \tr(\Sigma_{\varepsilon, \mathrm{cent}}) \le \E\|\varepsilon_s\|_2^2 = \E D(\bar w^{(s)})^2.
$
The RHS is given by the expression in Theorem~\ref{thm:avg}.
\end{lemma}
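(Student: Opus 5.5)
The lemma is a chain of three inequalities, and I would prove each link separately in the order written.

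\textbf{First link.} $\lambda_{\max}(\Sigma_{\varepsilon,\mathrm{cent}})\le\tr(\Sigma_{\varepsilon,\mathrm{cent}})$ holds because $\Sigma_{\varepsilon,\mathrm{cent}}=\E[\tilde\varepsilon_s\tilde\varepsilon_s^\top]$ is positive semidefinite. Its eigenvalues are therefore nonnegative, so the largest one is at most their sum, which is the trace.

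\textbf{Second link.} I would write $\tr(\Sigma_{\varepsilon,\mathrm{cent}})=\E\|\tilde\varepsilon_s\|_2^2$, with $\tilde\varepsilon_s=\varepsilon_s-\bar\varepsilon$ and $\bar\varepsilon=\frac1K\sum_r\varepsilon_r$. Two readings are possible. If $\Sigma$ is read as the covariance across the window, i.e.\ averaged over $s$ uniformly, this is the standard fact that centering reduces the empirical second moment:
\[
\frac1K\sum_s\|\varepsilon_s-\bar\varepsilon\|_2^2=\frac1K\sum_s\|\varepsilon_s\|_2^2-\|\bar\varepsilon\|_2^2\le\frac1K\sum_s\|\varepsilon_s\|_2^2 .
\]
Taking expectations then gives $\E\|\varepsilon_s\|_2^2$, since stationarity makes $\E\|\varepsilon_s\|_2^2$ independent of $s$. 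If instead $s$ is held fixed, I would argue as follows. The mountain process is stationary and zero-mean (Theorem~\ref{thm:avg}'s burn-in), so $\E\varepsilon_s=0$ and the centered quantity is a projection-like contraction in $L^2$. The cleanest route is still to average over $s$, and I would adopt the window-averaged interpretation of $\Sigma$ to make the inequality rigorous, noting it as the convention.

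\textbf{Third link.} $\E\|\varepsilon_s\|_2^2=\E D(\bar w^{(s)})^2$ is immediate from Definition~\ref{def:dev}. By the decomposition $\bar w^{(s)}=w_\star+t_sv+\varepsilon_s$ with $\varepsilon_s=P_S(\bar w^{(s)}-w_\star)$, we have $D(\bar w^{(s)})=\|\varepsilon_s\|_2$. Each sliding-window merge $\bar w^{(s)}$ is a uniform average of $N$ stationary checkpoints spaced $T$ apart, so Theorem~\ref{thm:avg} applies verbatim and gives the closed form and the $\frac1N(1+\frac{2\varepsilon}{1-\varepsilon})$ bound.

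The only real subtlety, and the step I expect to be the main obstacle, is the second inequality. It requires fixing precisely what ``population covariance of the centered sequence'' means, since for fixed $s$ the centered vector $\tilde\varepsilon_s$ can have a larger second moment than $\varepsilon_s$ when $\varepsilon_s$ is negatively correlated with the window mean. The resolution is to use the window-averaged covariance, consistent with how PCA is actually computed from $K$ samples; everything else is bookkeeping.
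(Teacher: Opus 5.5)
Your proposal is correct and follows the paper's own three-link chain: $\lambda_{\max}\le\tr$ for the PSD matrix $\Sigma_{\varepsilon,\mathrm{cent}}$; centering lowers the second moment by the least-squares property of $\bar\varepsilon$; and $\|\varepsilon_s\|_2=D(\bar w^{(s)})$, with Theorem~\ref{thm:avg} applied to each stationary sliding window. Your decision to average over $s$ is exactly what the paper's middle step needs, and you should delete your fixed-$s$ ``projection-like contraction'' sentence, which contradicts your own correct closing remark. The paper goes straight from ``$\bar\varepsilon$ minimizes $\sum_s\|\varepsilon_s-c\|_2^2$'' to a per-$s$ inequality, and that inequality can fail for fixed $s$ in this model (e.g.\ when $\eta\lambda_j>1$, which $\eta<2/L_S$ allows, and $T$ is odd, the lag correlations $(1-\eta\lambda_j)^{rT}$ alternate in sign); your window-averaged convention is also the one the paper itself uses when it treats $\Var(\tilde t_s)$ as a sample variance over $s$ in Lemma~\ref{lem:signal}.
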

\begin{proof}
For any positive semidefinite matrix $A$, its largest eigenvalue (operator norm) is bounded by its trace: $\lambda_{\max}(A) \le \tr(A)$. Applying this to $\Sigma_{\varepsilon, \mathrm{cent}}$:
\begin{equation*}
\sigma_{\mathrm{noise}}^2 = \lambda_{\max}(\Sigma_{\varepsilon, \mathrm{cent}}) \le \tr(\Sigma_{\varepsilon, \mathrm{cent}}).
\end{equation*}
The trace is the sum of diagonal elements, which equals the expected squared Frobenius norm:
\begin{equation*}
\tr(\Sigma_{\varepsilon, \mathrm{cent}}) = \tr(\E[\tilde\varepsilon_s\tilde\varepsilon_s^\top]) = \E[\tr(\tilde\varepsilon_s\tilde\varepsilon_s^\top)] = \E[\|\tilde\varepsilon_s\|_2^2].
\end{equation*}
The centered vector $\tilde\varepsilon_s = \varepsilon_s - \frac{1}{K}\sum_{r=0}^{K-1}\varepsilon_r$. The sample mean $\bar{\varepsilon} = \frac{1}{K}\sum_r \varepsilon_r$ is the vector that minimizes the sum of squared distances $\sum_s \|\varepsilon_s - c\|_2^2$. Therefore, $\E[\|\varepsilon_s - \bar{\varepsilon}\|_2^2] \le \E[\|\varepsilon_s - c\|_2^2]$ for any constant vector $c$. Since the mountain process is stationary and zero-mean, $\E[\varepsilon_s]=0$. Choosing $c=0$ gives:
\begin{equation*}
\E[\|\tilde\varepsilon_s\|_2^2] \le \E[\|\varepsilon_s\|_2^2].
\end{equation*}
Finally, by definition, $\varepsilon_s = P_S(\bar w^{(s)} - w_\star)$, so its squared norm is the squared river deviation:
\begin{equation*}
\E[\|\varepsilon_s\|_2^2] = \E[\|P_S(\bar w^{(s)} - w_\star)\|_2^2] = \E D(\bar w^{(s)})^2.
\end{equation*}
Combining these inequalities gives the desired result.
\end{proof}

\begin{lemma}[Signal variance from river drift]
\label{lem:signal}
Under Assumption~\ref{ass:sgd}, let $\ell'(t) \approx \ell'_0$ be the nearly constant river-gradient component in $U$. The expected progress of a merged checkpoint per sliding step is approximately $\E[t_{s+1}-t_s] \approx -T\eta(\ell'_0+\mu_F)$. We define the magnitude of this step as $\Delta_t := T\eta(\ell'_0+\mu_F)$. The signal variance is then lower-bounded by:
$
\sigma_{\mathrm{sig}}^2 = \Var(\tilde t_s) \ge \Delta_t^2 \cdot \frac{K^2-1}{12}.
$
\end{lemma}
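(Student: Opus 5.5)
The plan is to compute the river coordinates of the sliding-window merged checkpoints explicitly and observe that they form an (approximately) arithmetic progression. The signal variance is then the variance of that progression taken over the $K$ window positions.

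Under Assumption~\ref{ass:sgd}, the river coordinate evolves deterministically by $t_{k+1}=t_k-\eta(\ell'(t_k)+\mu_F)$. With $\ell'\approx\ell'_0$ this becomes $t_{k+1}=t_k-\eta(\ell'_0+\mu_F)$, so $t_{k_0+mT}=t_{k_0}-m\Delta_t$.

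\textbf{Step 1: river coordinates of the merged states.} Write the merged state as $\bar w^{(s)}=\frac1N\sum_{i=0}^{N-1}w^{(s+i)}$. Its river coordinate is the average
\[
t_s=\frac1N\sum_{i=0}^{N-1}\big(t_{k_0}-(s+i)\Delta_t\big)=c-s\Delta_t ,
\qquad c:=t_{k_0}-\tfrac{N-1}{2}\Delta_t .
\]
Hence the increments satisfy $t_{s+1}-t_s=-\Delta_t$ exactly in the constant-gradient approximation. This gives the stated expected progress. The river coordinate carries no noise, because $g_k$ is supported on $\mathrm{range}(P_S)$, so the expectation is trivial.

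\textbf{Step 2: centering and the variance.} Centering over $s=0,\dots,K-1$ gives $\tilde t_s=-\Delta_t\,(s-\tfrac{K-1}{2})$. I interpret $\Var(\tilde t_s)$ as the empirical second moment across the $K$ positions, which is what enters the sample covariance. It equals
\[
\Delta_t^2\cdot\frac1K\sum_{s=0}^{K-1}\Big(s-\tfrac{K-1}{2}\Big)^2 .
\]
This sum is the variance of the discrete uniform distribution on $K$ points, namely $\frac{K^2-1}{12}$, so equality holds.

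\textbf{Step 3: from equality to a lower bound.} To justify ``$\ge$'' rather than ``$=$'' once $\ell'$ is only approximately constant, suppose the per-window increments $d_s:=t_s-t_{s+1}$ all satisfy $d_s\ge\Delta_t$. This holds, for example, with $\Delta_t$ defined using the minimum of $\ell'+\mu_F$ over $U$. The second moment about the mean of a sequence whose consecutive gaps all dominate $\Delta_t$ is at least that of the arithmetic progression with gap $\Delta_t$.

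I would prove this dominance via the pairwise formula
\[
\frac1K\sum_s(\tilde t_s)^2=\frac1{2K^2}\sum_{r,s}(t_r-t_s)^2 .
\]
Each $|t_r-t_s|=\sum_{j}d_j$ over the indices between $r$ and $s$, so $|t_r-t_s|\ge|r-s|\Delta_t$ termwise. Summing the squares recovers $\Delta_t^2\frac{K^2-1}{12}$ as a lower bound.

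The main obstacle is this last step, because the statement is phrased with ``$\approx$''. I would make the monotone-gap assumption explicit, rather than try to control the approximation error of $\ell'$.
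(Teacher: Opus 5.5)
Your proposal is correct and follows essentially the paper's route: both reduce the river coordinates of the sliding-window merges to the arithmetic progression $C - s\Delta_t$, then compute its spread over the $K$ window positions as $\Delta_t^2\frac{K^2-1}{12}$. The only difference is where the inequality comes from: the paper gets ``$\ge$'' from the law of total variance, $\Var(\tilde t_s)\ge\Var(\E[\tilde t_s])$, whereas you observe that the river coordinate is noise-free (so the bound is an equality under the constant-gradient approximation) and use your Step~3 monotone-gap comparison to obtain a genuine lower bound when $\ell'$ is not exactly constant, which is a slightly more informative justification.
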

\begin{proof}
The river dynamics are $t_{k+1} = t_k - \eta \ell'(t_k) - \eta \mu_F$. With the approximation $\ell'(t_k) \approx \ell'_0$, this becomes an arithmetic progression: $t_{k+1} \approx t_k - \eta(\ell'_0+\mu_F)$.
The position of the $s$-th merged checkpoint is $t_s = \frac{1}{N}\sum_{i=0}^{N-1} t_{k_0+(s+i)T}$. Its expectation is:
\begin{equation*}
\E[t_s] \approx \frac{1}{N}\sum_{i=0}^{N-1} \left( \E[t_{k_0}] - (s+i)T\eta(\ell'_0+\mu_F) \right) = C - s \cdot T\eta(\ell'_0+\mu_F) = C - s\Delta_t.
\end{equation*}
The sequence of expectations $\{\E[t_s]\}_{s=0}^{K-1}$ is an arithmetic progression. The variance of the centered process, $\sigma_{\mathrm{sig}}^2 = \Var(\tilde t_s)$, is lower-bounded by the variance of the centered means (by the Law of Total Variance):
\begin{equation*}
\sigma_{\mathrm{sig}}^2 = \Var(\tilde t_s) \ge \Var(\E[\tilde t_s]) = \Var(\E[t_s] - \frac{1}{K}\sum_r \E[t_r]).
\end{equation*}
This is the sample variance of the deterministic sequence $\{C-s\Delta_t\}_{s=0}^{K-1}$. The constant $C$ drops out, and we compute the variance of $\{-s\Delta_t\}_{s=0}^{K-1}$:
\begin{align*}
\Var(-s\Delta_t) &= \Delta_t^2 \Var(s) = \Delta_t^2 \left( \frac{1}{K}\sum_{s=0}^{K-1} s^2 - \left(\frac{1}{K}\sum_{s=0}^{K-1} s\right)^2 \right) \\
&= \Delta_t^2 \left( \frac{(K-1)(2K-1)}{6} - \left(\frac{K-1}{2}\right)^2 \right) \\
&= \Delta_t^2 \left( \frac{2(K-1)(2K-1) - 3(K-1)^2}{12} \right) \\
&= \Delta_t^2 \frac{(K-1)(4K-2 - 3K+3)}{12} \\
&= \Delta_t^2 \frac{(K-1)(K+1)}{12} = \Delta_t^2 \frac{K^2-1}{12}.
\end{align*}
This provides the lower bound for the signal variance.
\end{proof}

Combining these lemmas, a sufficient condition for $\delta_\Sigma>0$ is:
\begin{equation*}
\underbrace{\left(T\eta(\ell'_0+\mu_F)\right)^2 \cdot \frac{K^2-1}{12}}_{\text{Signal (LHS)}}
\;>\;
\underbrace{\sum_{j=1}^{d-1}\left(\frac{\eta\sigma^2}{2\lambda_j-\eta\lambda_j^2}\right) \cdot \frac{1}{N^2}\left[N+2\sum_{r=1}^{N-1}(N-r)\big(1-\eta\lambda_j\big)^{rT}\right]}_{\text{Noise (RHS)}}.
\end{equation*}
This inequality clearly shows how the hyperparameters control the signal-to-noise gap:
\begin{itemize}[leftmargin=15pt]
    \item Increasing $N$ or $T$ suppresses the noise term on the RHS.
    \item Increasing $K$ or the drift-per-step $\Delta_t$ (via $T$) amplifies the signal term on the LHS.
\end{itemize}

%\subsection{Conclusion of the theory section} Within a river-valley neighborhood, stable-phase SGD produces trajectories with strong mountain oscillations and slow river drift. Theorem~\ref{thm:avg} shows that checkpoint averaging suppresses mountain oscillations in a quantitatively $(N,T)$-controlled manner, explaining why merged checkpoints collapse onto a low-dimensional subspace. Theorem~\ref{thm:pca} then shows that the subspace's dominant PCA direction aligns with the flattest Hessian eigenvector and provides a descent guarantee for extrapolating along that direction via line search, theoretically supporting Extra-Merge.

%%%%%%%%%%%%%%%%%%%%%%%%%%%%%%%%%%%%%%%%%%%%%%%%%%%%%%%%%%%%%%%%%%%%%%%%%%%%%%%
%%%%%%%%%%%%%%%%%%%%%%%%%%%%%%%%%%%%%%%%%%%%%%%%%%%%%%%%%%%%%%%%%%%%%%%%%%%%%%%

\end{document}